\newtheorem{theorem}{Theorem}
\newtheorem{lemma}{Lemma}
\newtheorem{definition}{Definition}
\renewcommand\hl[1]{#1}
\begin{document}
\newcommand\mdoubleplus{\mathbin{+\mkern-10mu+}}

\title{ConvXformer: Differentially Private Hybrid ConvNeXt-Transformer for Inertial Navigation}
\author{Omer Tariq\,\orcidlink{0000-0002-1771-6166}, Muhammad Bilal\,\orcidlink{0000-0003-4221-0877},~\IEEEmembership{~Senior Member, IEEE}, Muneeb Ul Hassan\,\orcidlink{0000-0001-5109-9547}, ~\IEEEmembership{Member, IEEE}, \\Dongsoo Han,~\IEEEmembership{~Senior Member, IEEE}, Jon Crowcroft\,\orcidlink{0000-0002-7013-0121},~\IEEEmembership{~Fellow, IEEE}
\thanks{Omer Tariq and Dongsoo Han are with the School of Computing, Korea Advanced Institute of Science and Technology, Daejeon, 34141, South Korea. (E-mail: omertariq@kaist.ac.kr, ddsshhan@kaist.ac.kr)}
\thanks{Muhammad Bilal is with the School of Computing and Communications Lancaster University, Lancaster LA1 4YW, United Kingdom (Corresponding Author E-mail: m.bilal@ieee.org)}
\thanks{Muneeb Ul Hassan is with the School of Information Technology, Deakin University, Australia (E-mail: muneebmh1@gmail.com)}
\thanks{Jon Crowcroft is with the Department of Computer Science and Technology, University of Cambridge, CB3 0FD Cambridge, U.K. (e-mail: jon.crowcroft@cl.cam.ac.uk).}

}


\maketitle

\begin{abstract}
Data-driven inertial sequence learning has revolutionized navigation in GPS-denied environments, offering superior odometric resolution compared to traditional Bayesian methods. However, deep learning-based inertial tracking systems remain vulnerable to privacy breaches that can expose sensitive training data. \hl{Existing differential privacy solutions often compromise model performance by introducing excessive noise, particularly in high-frequency inertial measurements.}
In this article, we propose ConvXformer, a hybrid architecture that fuses ConvNeXt blocks with Transformer encoders in a hierarchical structure for robust inertial navigation. We propose an efficient differential privacy mechanism incorporating adaptive gradient clipping and gradient-aligned noise injection (GANI) to protect sensitive information while ensuring model performance. Our framework leverages truncated singular value decomposition for gradient processing, enabling precise control over the privacy-utility trade-off. Comprehensive performance evaluations on benchmark datasets (OxIOD, RIDI, RoNIN) demonstrate that ConvXformer surpasses state-of-the-art methods, achieving \hl{more than 40\%} improvement in positioning accuracy while ensuring $(\epsilon,\delta)$-differential privacy guarantees. To validate real-world performance, we introduce the Mech-IO dataset, collected from the mechanical engineering building at KAIST, where intense magnetic fields from industrial equipment induce significant sensor perturbations.
This demonstrated robustness under severe environmental distortions makes our framework well-suited for secure and intelligent navigation in cyber-physical systems.

\end{abstract}

\begin{IEEEkeywords}
Neural Inertial Navigation, Odometry, Deep Learning, Transformer, Differential Privacy (DP).
\end{IEEEkeywords}

\section{Introduction}
Inertial Navigation Systems (INS) are pivotal for cost-effective, energy-efficient navigation, providing independence from external signals by processing initial conditions to determine position, velocity, and orientation. This autonomy ensures resilience against jamming and spoofing, enabling secure applications in intelligent transportation, healthcare infrastructure, and virtual reality \cite{DNN-Survey}. Despite their promise, low-cost Inertial Measurement Units (IMUs) suffer from significant challenges like environmental noise, temperature drift, and bias instability, which limit their practical deployment in consumer applications.
Traditional inertial navigation relies on triple integration for position computation; however, this process compounds stochastic noise, leading to exponential error growth that restricts reliable operation to only 1–2 minutes without external corrections~\cite{intro5}. While high-end sensors mitigate this drift, their bulk and cost make them unsuitable for pedestrian applications. Domain-specific methods such as gait-based Pedestrian Dead Reckoning (PDR) and Zero-Velocity Updates (ZUPTs) improve accuracy but remain sensitive to irregular gaits and false detections \cite{intro5}.

By learning motion patterns from large-scale datasets, neural networks have advanced data-driven inertial navigation for pedestrian tracking, outperforming traditional methods through dynamic error compensation and improved trajectory estimation from raw IMU data~\cite{DNN-Survey}.
\IEEEpubidadjcol
Despite achieving 1.5–3.0 m accuracy on benchmarks like RIDI and RoNIN, current architectures face key limitations in real-world use. CNNs effectively extract local temporal patterns but fail to model long-range dependencies needed for drift correction, while Transformers capture global relationships but incur quadratic complexity and are sensitive to noise-induced attention errors.

The deployment of neural inertial navigation systems poses serious privacy risks that existing methods overlook. Trajectory data expose spatiotemporal mobility patterns that compromise user anonymity~\cite{intro6}. Adversaries can exploit inertial sensor data with deep learning to reconstruct sub-meter trajectories, bypassing GPS and enabling de-anonymization through temporal clustering and mobility database matching~\cite{tmc-p}. In healthcare and surveillance, such breaches reveal sensitive behaviors, medical states, and personal routines. Conventional defenses are insufficient: k-anonymity~\cite{intro7, k-anony} fails against informed attackers, while cryptographic methods~\cite{crypt_loc} are too computationally expensive for mobile use.

Existing differential privacy mechanisms face severe utility–privacy trade-offs that limit their practicality for inertial navigation. Standard Differentially Private Stochastic Gradient Descent (DP-SGD)~\cite{DP-SGD} injects uniform noise that disrupts the hierarchical structure of temporal sensor features, causing 30–40\% accuracy degradation. This uniform noise fails to reflect the structured sensitivity of inertial data across temporal scales and sensor modalities. In Transformer architectures, DP-SGD further distorts self-attention, producing attention distraction—where high-variance tokens dominate attention, degrading convergence and generalization~\cite{DP-Transformer}. Additionally, inefficient gradient clipping in shared embeddings exacerbates this loss; Abadi \textit{et al.}~\cite{DP-SGD} reported accuracy drops from 99\% to 95\% even on simple MNIST tasks.

\hl{The core challenge in privacy-preserving inertial navigation is balancing high positioning accuracy with strong privacy guarantees, a trade-off existing methods fail to optimize. Current approaches lack: (1) architectures that inherently compartmentalize gradients for efficient privacy preservation, (2) noise injection schemes that maintain the hierarchical structure of temporal features, and (3) adaptive mechanisms that align privacy perturbations with gradient utility. This gap highlights the need for frameworks that embed privacy into the architectural design itself, rather than treating it as an external constraint.}

\hl{To overcome these limitations, we propose ConvXformer, a differentially private hybrid architecture that integrates ConvNeXt blocks with Transformer encoders for robust inertial navigation. The framework introduces three key innovations: (1) a hierarchical four-stage design enabling natural gradient compartmentalization, (2) adaptive gradient clipping with truncated SVD for structured sensitivity control, and (3) Gradient-Aligned Noise Injection (GANI), which aligns perturbations with principal gradient directions to preserve utility. ConvXformer achieves strong positioning accuracy while ensuring rigorous ($\epsilon,\delta$)-differential privacy through utility-aware noise injection.}
We evaluate our approach on four benchmark datasets—RIDI~\cite{RIDI}, RoNIN~\cite{Ronin}, and OxIOD~\cite{oxiod}- and introduce Mech-IO, a new real-world dataset with magnetic interference. ConvXformer outperforms state-of-the-art methods by 35–60\% in positioning accuracy, and its privacy-preserving variant retains 15–30\% improvement under strict privacy budgets, demonstrating its effectiveness for privacy-sensitive navigation applications.

The primary contributions of this research are as follows:
\begin{enumerate}
    \item \hl{We introduce ConvXformer, a hierarchical architecture that integrates ConvNeXt blocks with Transformer encoders across four stages, resolving the fundamental CNN-Transformer computational trade-off while enabling natural gradient compartmentalization for privacy-preserving optimization.}
    \item \hl{We propose a unified compositional transformation framework combining rotation, scaling, skewing, and Gaussian noise perturbations, enhancing model generalization across diverse motion patterns and environmental conditions.}
    \item \hl{We present a privacy-preserving training framework utilizing adaptive gradient clipping and GANI, overcoming the 30-40\% performance degradation limitation of standard DP-SGD by introducing utility-weighted noise injection along dominant gradient directions.}
    \item \hl{We demonstrate superior performance across benchmark and real-world datasets, including the newly introduced Mech-IO dataset, establishing new state-of-the-art results in both accuracy and privacy preservation for inertial navigation systems.}
\end{enumerate}

The paper is organized as follows: Section II reviews data-driven inertial tracking and privacy-preserving methods. Section III outlines the theoretical foundations of differentially private inertial navigation. Section IV introduces the proposed hierarchical architecture and privacy mechanism. Section V describes the implementation methodology, while Section VI presents experimental results and ablation studies. Section VII concludes with future research directions.

\section{Related Work}
This section reviews deep learning-based inertial navigation frameworks and privacy-preserving methodologies, highlighting the distinctions between existing approaches and the proposed ConvXformer.
\subsection{Deep Learning-based Inertial Navigation Frameworks}
Deep learning architectures effectively mitigate systematic IMU integration errors, with contemporary systems dominated by three paradigms. CNN-based methods~\cite{imunet,fednav, deepils} capture local temporal patterns but fail to model long-term dependencies. RNNs handle sequential data but suffer from vanishing gradients, while Transformer-based architectures~\cite{ctin,nanomst} model global dependencies at the cost of high computational expense and susceptibility to noise. \hl{Critically, these architectures treat privacy preservation as an afterthought, applying uniform noise that disrupts the temporal hierarchy essential for inertial tracking. In contrast, ConvXformer co-designs its architecture for privacy; its four-stage hierarchy naturally compartmentalizes gradients by temporal scale, enabling structured noise injection that aligns with inertial data characteristics.}

\subsection{Differential Privacy for Location-based services}
\hl{Conventional differential privacy mechanisms face key limitations in inertial navigation: (1) uniform noise injection that ignores the temporal hierarchy of inertial features, (2) global gradient processing that introduces computational inefficiency, and (3) severe privacy–utility trade-offs leading to 30–40\% accuracy degradation.}
DP-SGD \cite{DP-SGD} \hl{applies uniform Gaussian noise across all gradient components with global clipping thresholds, achieving 95\% accuracy at ($\epsilon = 2.0$, $\delta = 10^{-5}$) compared to 99\% without privacy. However, this global approach neglects the varying sensitivity of multi-scale temporal features essential for inertial tracking. Extensions such as DP3~\cite{DP3} for indoor localization and PAPU~\cite{PAPU} for vehicular networks inherit this limitation, requiring privacy budgets too restrictive for practical deployment.}

Adaptive methods such as AdaClip~\cite{AdaCliP} improve gradient utilization via coordinate-wise adaptive clipping but incur $O(d^2)$ complexity for $d$-dimensional gradients and lack structural decomposition. Directional noise alignment by Xiang \textit{et al.} \cite{dp2} reduces privacy noise by 15–20\%, while Ben et al.~\cite{P-DP1} dynamically tune privacy parameters. Yet, these approaches remain decoupled from model architecture and overlook the temporal hierarchy inherent in inertial data, limiting their effectiveness.
\hl{ConvXformer overcomes these challenges through architectural co-design, introducing: (1) Gradient-Aligned Noise Injection (GANI) exploits natural gradient compartmentalization from the hybrid architecture, reducing computational complexity to $O(dk)$ for $k$-rank approximation; (2) Truncated SVD-based adaptive clipping that preserves principal gradient components while bounding sensitivity; and (3) Utility-weighted perturbations that align noise injection with temporal feature importance. This integration achieves superior privacy–utility trade-offs by embedding privacy within the model architecture rather than applying it as post-processing.}

\section{Preliminaries}

\subsection{Data-driven Inertial Navigation Approach}
In data-driven inertial navigation, a sequence of IMU readings \(\mathbf{X} = \{\mathbf{x}_{1}, \mathbf{x}_{2}, \dots, \mathbf{x}_{n}\}\) is used to estimate velocity. Each reading \(\mathbf{x}_t = [\mathbf{a}_t, \boldsymbol{\omega}_t] \in \mathbb{R}^6\) contains the three-axis linear acceleration \(\mathbf{a}_t\) and angular velocity \(\boldsymbol{\omega}_t\) at time step \(t\). A deep learning model learns a function \(f\) that maps these input sequences over a sliding window of \(k\) frames to a 2-D velocity vector \(\mathbf{v}_t = [v_x, v_y]\), compensating for noise, orientation drift, and sensor biases \cite{DNN-Survey}:
\begin{equation}
\mathbf{v}_t = f(\mathbf{x}_{t-k+1}, \mathbf{x}_{t-k+2}, \dots, \mathbf{x}_{t}).
\end{equation}

To reduce noise and stabilize training, the model minimizes the difference between cumulative predicted velocities (\(\mathbf{v}_i\)) and cumulative ground-truth velocities (\(\hat{\mathbf{v}}_i\)) over each sliding window:
\begin{equation}
    \text{loss} 
    = \frac{1}{m} \sum_{t=k}^n 
        \left\| 
            \sum_{i=t-k+1}^t \mathbf{v}_i \,\Delta t 
            \;-\; 
            \sum_{i=t-k+1}^t \hat{\mathbf{v}}_i \,\Delta t 
        \right\|^2,
    \label{eq:loss}
\end{equation}
where \(\|\cdot\|\) is the \(\ell_2\) norm, \(m\) is the total number of sliding windows, and \(\Delta t\) is the time interval between frames. To further mitigate drift, sensor readings are transformed into a consistent global frame using a rotation matrix \(R_t\) derived from the angular velocity sequence \(\boldsymbol{\omega}_{1:t}\).

\subsection{Differential Privacy: Definitions and Mechanisms}
Differential Privacy (DP) \cite{dp1st} is a mathematical framework that protects individual privacy by ensuring that the inclusion or exclusion of a single record in a dataset has a minimal, bounded impact on the analysis outcome. The most common variant is \((\epsilon, \delta)\)-DP \cite{DP-Survey}, which is defined as follows:

\begin{definition}
A randomized mechanism \(\mathcal{A} : \mathcal{X}^n \rightarrow \mathcal{R}\) satisfies \((\epsilon, \delta)\)-differential privacy if, for any two adjacent datasets \(D\) and \(D'\) (differing by one entry), and for any subset of outputs \(O \subseteq \mathcal{R}\), the following holds:
\begin{equation}
    \Pr[\mathcal{A}(D) \in O] \leq e^{\epsilon} \cdot \Pr[\mathcal{A}(D') \in O] + \delta.
    \label{eq:dp}
\end{equation}
\end{definition}

The privacy budget \(\epsilon\) controls the trade-off between privacy and utility, while \(\delta\) permits a small probability of breaking the stricter \(\epsilon\)-DP guarantee (where \(\delta=0\)) \cite{shokri}. DP is typically achieved by adding calibrated noise to a function's output. The amount of noise is scaled by the function's sensitivity \(\Delta f\), defined as the maximum possible change in the output when one record is changed, measured using the \(\ell_1\) norm for Laplace noise or the \(\ell_2\) norm for Gaussian noise.

\subsection{Differentially Private Stochastic Gradient Descent (DP-SGD)}
To achieve differential privacy in neural network training, Differentially Private Stochastic Gradient Descent (DP-SGD) \cite{DP-SGD} modifies the gradient update process. First, it bounds the influence of each data point by clipping the \(\ell_2\) norm of individual gradients \(g_i\) to a threshold \(C\):
\begin{equation}
    \tilde{g}_i = g_i \cdot \min\left(1, \frac{C}{\|g_i\|_2}\right).
\end{equation}
Second, it adds calibrated Gaussian noise to the averaged gradients of a mini-batch \(B\), scaled by a noise multiplier \(\sigma\):
\begin{equation}
    \bar{g} = \frac{1}{|B|} \left( \sum_{i \in B} \tilde{g}_i + \mathcal{N}\left(0, \sigma^2 C^2 \mathbf{I}\right) \right).
\end{equation}
Throughout training, a moment accountant tracks the cumulative privacy loss to ensure an overall guarantee of \((\epsilon, \delta)\)-DP.

\begin{figure} [t!]
    \centering
    \includegraphics[width=0.9\linewidth]{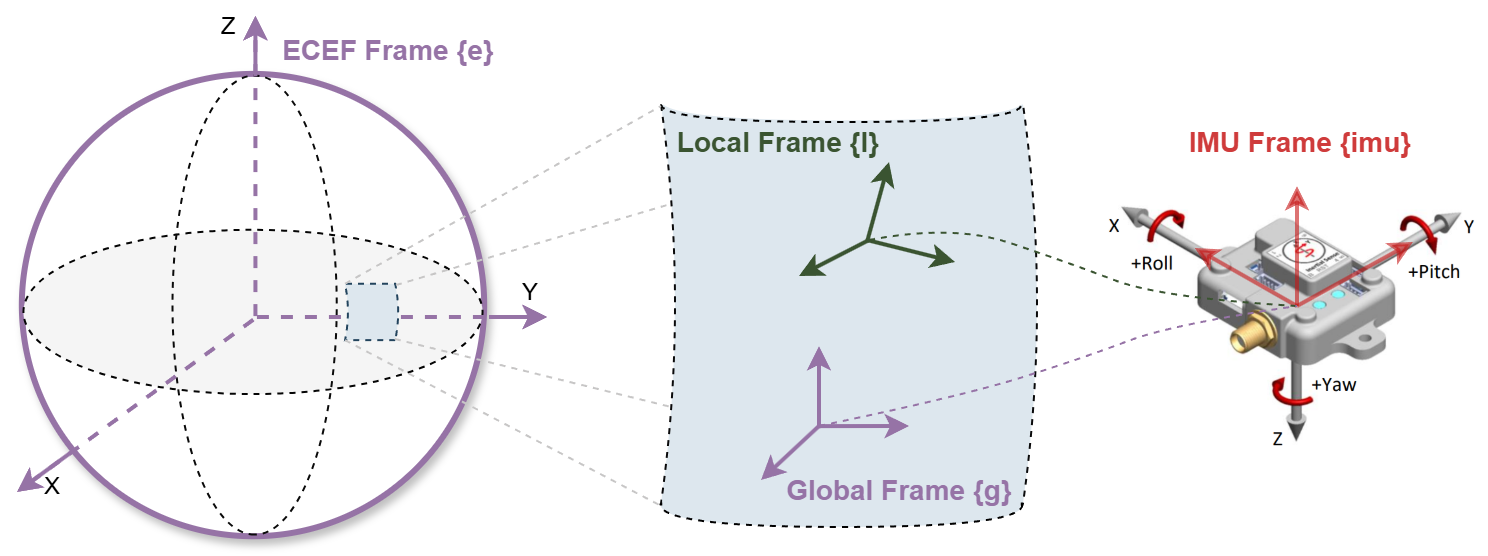}
    \caption{World and IMU coordinate frames: (a) ECEF frame \{e\} at Earth's center, with x-axis toward the prime meridian and equator, z-axis toward the North Pole; (b) ENU \{n\} and local \{l\} frames on the ground, z-axis upward; (c) IMU frame \{imu\} attached to the sensor.}
    \vspace{-0.3cm}
    \label{fig:cf}
\end{figure}

\section{Methodology}
\label{sec:methodology}
\hl{This section comprehensively explains the data-augmentation technique, proposed neural network design, and differentially private training framework}. The methodology encompasses model architecture design with ConvNeXt transformers, privacy-preserving mechanisms including adaptive gradient clipping and GANI, and theoretical foundations that underpin our approach.

\subsection{Proposed Data Augmentation Technique}
The proposed data augmentation implements a comprehensive transformation pipeline comprising rotational adjustments, scale modifications, and skew operations, culminating in the addition of Gaussian noise to accurately simulate real-world sensor behavior.
The framework initiates with a rotational transformation characterized by angle $\theta \in [0, \Theta_{\text{max}}]$, mathematically expressed as:
\begin{equation}
R = \begin{bmatrix}
\cos \theta & -\sin \theta \\
\sin \theta & \cos \theta
\end{bmatrix}
\end{equation}

Subsequently, the framework incorporates the scale variation through a multiplicative factor $s \in [1 - \Delta_s, 1 + \Delta_s]$, defined by:
\begin{equation}
S = s \cdot I
\end{equation}

To account for systematic nonlinearities, the methodology implements a skew transformation parameterized by $k \in [-\Delta_k, \Delta_k]$:
\begin{equation}
K = \begin{bmatrix}
1 & k \\
0 & 1
\end{bmatrix}
\end{equation}

These transformations are consolidated into a unified transformation matrix:
\begin{equation}
T = S \cdot (R \cdot K)
\end{equation}

The final augmented signal incorporates stochastic variation through additive Gaussian noise $\mathcal{N}(0, \sigma^2)$:
\begin{equation}
\mathbf{F}_{\text{aug}} = \mathbf{F} + \mathcal{N}(0, \sigma^2)
\end{equation}

Through extensive empirical validation, we establish optimal parameter values: $\Theta_{\text{max}} = \pi/4$ radians, $\Delta_s = 0.2$, $\Delta_k = 0.1$, and $\sigma = 0.01$. This parameter configuration achieves an optimal balance between introduced variability and preservation of essential signal characteristics.

\begin{figure} [t!]
    \centering
    \includegraphics[width=0.7\linewidth]{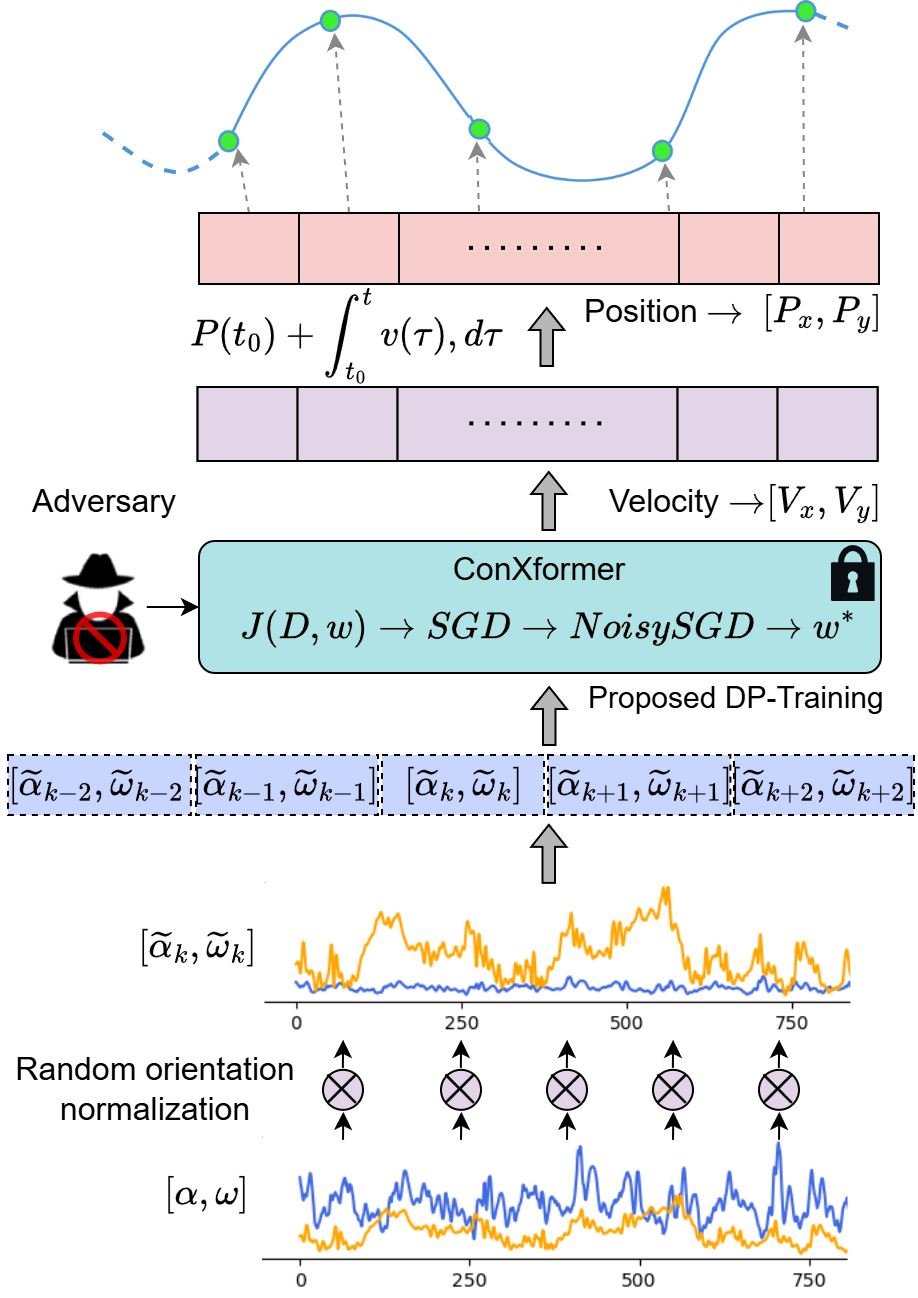}
    \caption{Differentially Private Training Framework for ConvXformer.}
    \label{fig:overview}
    \vspace{-0.3cm}
\end{figure}

\subsection{Model Architecture and Design}
This section describes our proposed ConvXformer's architecture and discusses the design strategy's significance for the inertial navigation problem, as shown in Figure \ref{fig:overview}.
\subsubsection{Overall Architecture}
The \textit{ConvXformer} is a hybrid architecture that combines the localized feature extraction of ConvNeXt-style convolutions with the global contextual modeling capabilities of Transformer-based self-attention mechanisms. By seamlessly integrating \textit{ConvNeXt blocks} and \textit{Transformer encoder layers}, ConvXformer excels at capturing both fine-grained spatial details and extended spatial dependencies, essential for inertial data processing. This multi-stage, hierarchical design with progressive downsampling balances computational efficiency and enhanced representational power.
ConvXformer adopts a four-stage, pyramid-like structure, where each stage reduces spatial resolution while increasing feature depth. This architecture aligns with recent hierarchical vision models such as ConvNeXt \cite{convnext}, combining convolutional layers for efficient local feature extraction with self-attention for broader contextual capture. The feature map resolution decreases while depth increases across stages, producing increasingly abstract representations deeper in the network.
Following a stage-wise design for computational efficiency, ConvXformer effectively captures multi-level spatial hierarchies while maintaining coherence across global features. 
ConvXformer is composed of four sequential stages with a different feature map resolution, each starting with a downsampling layer followed by several ConvNeXt blocks in each stage. Initially, a stem layer processes the input signal $X \in \mathbb{R}^{6 \times 200}$. A Conv1D with a kernel size of $k=4$ and stride $s=4$ is applied, mapping $X$ to an initial feature space of $(64, 50)$, where 64 represents the channel dimension, and 50 is the downscaled spatial dimension. Batch Normalization follows to stabilize feature distribution. Each stage has a different number of ConvNeXt blocks that integrate \textit{inverted bottleneck structures} and \textit{transformer encoder}. We present five ConvXformer variants, referred to as ConvXformer-$\alpha$, ConvXformer-$\beta$, ConvXformer-$\gamma$, and ConvXformer-$\delta$. The architectural details of these variants are stated in Table \ref{Table:model_configs}.

\subsubsection{Stage Configuration}
Balancing network depth and width under a fixed parameter budget is essential for efficient yet accurate performance. 
While ConvNeXt-T~\cite{convnext} employs a \((3,3,9,3)\) block distribution per stage, 
our baseline ConvXformer-\(\alpha\) adopts \((2,2,2,2)\), 
providing a optimal trade-off at 15\,M parameters and 180\,M FLOPs. Subsequent variants (ConvXformer-\(\beta\), \(\gamma\), and \(\delta\)) increase stage depth and channel width, raising parameter counts and FLOPs from 25\,M/270\,M to 69\,M/727\,M. These large deep models deliver improved positioning accuracy, reducing average trajectory errors by up to 0.5\,m, 
albeit at the cost of higher computational overhead.

\begin{table}[t!]
\centering
\caption{Brief Configurations of the Proposed ConvXformer}
\label{Table:model_configs}
\small
\begin{tabular}{c@{\hspace{2pt}}|@{\hspace{2pt}}c@{\hspace{2pt}}|@{\hspace{2pt}}c@{\hspace{2pt}}|@{\hspace{2pt}}c@{}|@{\hspace{2pt}}c@{}}
\hline \hline
Model & Channels & Depths & Param\hspace{2pt} & FLOP \\ \hline \hline

ConvXformer-$\alpha$ & \{64, 128, 256, 512\} & \{2, 2, 2, 2\} & 15M & 180M \\ 
ConvXformer-$\beta$ & \{64, 128, 256, 512\} & \{3, 3, 3, 3\} & 25M & 270M \\ 
ConvXformer-$\gamma$ & \{72, 144, 288, 576\} & \{3, 3, 4, 3\} & 31M & 345M \\ 
ConvXformer-$\delta$ & \{96, 192, 384, 768\} & \{4, 4, 6, 4\} & 69M & 727M \\ \hline \hline

\end{tabular}
\caption*{$\bullet$ We design four variants of the ConvXformer model with different depths=$\{B_1, B_2, B_3, B_4\}$ and channels=$\{C_1, C_2, C_3, C_4\}$.}
\vspace{-0.4cm}
\end{table}

Each downsampling layer, positioned at the beginning of each stage, applies a 1D convolution with kernel size $k=2$ and stride $s=2$. This operation halves the spatial dimension at each stage, yielding feature maps of dimensions $(128, 25)$, $(256, 12)$, $(512, 6)$, and $(512, 3)$ in subsequent stages. Batch Normalization is applied after each downsampling operation. Stage I captures fine-grained spatial details, which develop into mid-level abstractions in Stage II. Stages III and IV then progressively expand the model’s receptive field, which is essential for high-level feature representations. The channel dimension increases from $C = 64$ in the first stage to $C = 512$ in the final stage, supporting more complex feature hierarchies as spatial dimensions reduce. 

\begin{figure*}[t!]
    \centering
    \includegraphics[width=0.95\linewidth]{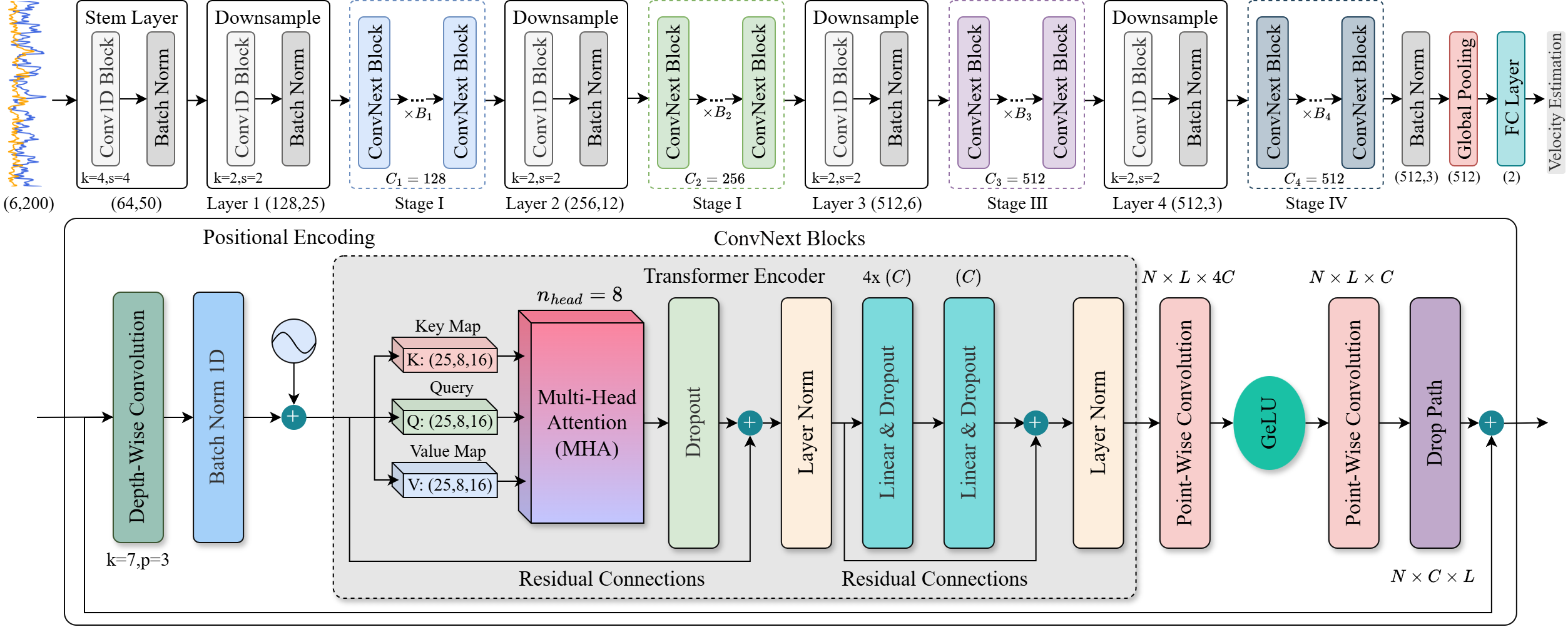}
    \caption{Overall architecture of ConvXformer. The design strategy is based on a pyramid structure with four stages similar to \cite{convnext}. Each stage comprises two ConvNeXt blocks with integrated Transformer encoders hierarchically. This figure shows the configuration of the proposed ConvXformer where $\{B_1, B_2, B_3, B_4\}$=\{2,2,2,2\}.}
    \label{fig:model}
    \vspace{-0.3cm}
\end{figure*}

\subsubsection{ConvNeXt Block with Integrated Transformer Encoder}
In each ConvNeXt block, a depthwise convolution with kernel size $k=7$, padding $p=3$, and group size equal to the channel dimension is applied. For an input feature $X \in \mathbb{R}^{C \times L}$, where $C$ is the channel dimension and $L$ the spatial dimension, the depthwise convolution outputs a feature map $Z \in \mathbb{R}^{C \times L}$ computed as:
\begin{equation}
Z = \text{DConv}_{7 \times 1}(X),
\end{equation}
which enhances localized feature interactions while preserving channel separability. This operation results in feature maps of the same dimension as the input for each block in a given stage.
ConvXformer employs an inverted bottleneck within each block to enhance representational efficiency. The structure expands the channel dimension by a factor of 4 through a pointwise convolution, followed by a GELU non-linearity and a second pointwise convolution to compress back to the original dimension $C$. If $Z \in \mathbb{R}^{C \times L}$ is the input to the inverted bottleneck, the transformations are as follows:
\begin{equation}
Y = \text{GELU}(\text{PWConv1}(Z)),\text{ where } \text{PWConv1}: \mathbb{R}^{C} \rightarrow \mathbb{R}^{4C},
\end{equation}
\begin{equation}
\text{Output} = \text{PWConv2}(Y), \text{ where } \text{PWConv2}: \mathbb{R}^{4C} \rightarrow \mathbb{R}^{C}.
\end{equation}

After the depthwise convolution, each ConvNeXt block includes a Transformer encoder with positional encoding. The positional encoding enriches the spatial coherence of the features across all stages. Transformer parameters scale across stages, from $d_{\textnormal{model}}=128$ and $d_{\textnormal{ff}}=512$ in Stage I to $d_{\textnormal{model}}=512$ and $d_{\textnormal{ff}}=2048$ in Stage IV, with $n_{\textnormal{head}}=8$ throughout.
Multi-head attention (MHA) with $n_{\textnormal{head}}=8$ captures long-range dependencies. For each input feature $X \in \mathbb{R}^{L \times C}$, the Transformer encoder generates query $Q$, key $K$, and value $V$ matrices of dimension $\mathbb{R}^{L \times C}$, and computes attention scores as:
\begin{equation}
A = \text{Softmax} \left( \frac{Q K^T}{\sqrt{d}} \right),
\end{equation}
where $d = C/n_{\textnormal{head}}$ is the scaling factor for stable gradients. 
The attended feature map is then computed as:
\begin{equation}
\text{Attention}(X) = A V.
\end{equation}
Residual connections and Layer Normalization follow each MHA and fully connected layer within the Transformer encoder block to enhance stability and performance. Unlike ConvNeXt’s Layer Normalization, our proposed ConvXformer uses Batch Normalization to stabilize training across varying batch sizes. Batch Normalization is applied after each convolutional layer, ensuring feature consistency. Additionally, DropPath regularization is employed within the Transformer encoder and at the end of each block to enhance generalization by randomly dropping paths during training.
At the end of Stage IV, a global average pooling operation aggregates spatial information, reducing the feature map $\mathbb{R}^{512 \times 3}$ to a vector of $\mathbb{R}^{512}$. A fully connected (FC) layer then maps this vector to the final output logits, aligning with the number of output classes for velocity estimation. The output logits are given by:
\begin{equation}
F_{\text{out}} = \text{FC}(\text{GAP}(F)),
\end{equation}
where $\text{GAP}(F)$ represents the globally averaged feature vector from the final stage.

\subsubsection{Micro Design}
\hl{ConvXformer's micro design is founded on three core design principles addressing fundamental inertial navigation challenges: temporal locality preservation, global dependency modeling, and computational efficiency under privacy constraints. These principles drive targeted strategies that enhance performance and reduce positioning error by 10–15\%.
ConvXformer employs a pyramid structure where ConvNeXt blocks handle local temporal patterns while Transformer encoders capture long-range dependencies. This addresses inertial data characteristics where fine-grained sensor fluctuations must integrate with broader motion patterns. The four-stage downsampling strategy (200→50→25→12→6→3) creates natural gradient compartmentalization essential for differential privacy, producing feature representations at different temporal resolutions. This contrasts with uniform architectures lacking hierarchical gradient structure, making privacy-preserving decomposition less effective.

\textit{Larger kernel than $3\times1$:} The $7 \times 1$ depthwise convolution kernel captures temporal correlations spanning 35ms windows at 200Hz sampling rate, corresponding to human gait micro-movements. Unlike vision-oriented hybrid architectures employing $3 \times 3$ kernels for 2D spatial features, this temporal-aware design enhances feature extraction and reduces drift compared to traditional $3 \times 1$ kernels.

\textit{Inverted bottleneck design:} The 4× channel expansion creates adaptive receptive fields scaling with motion complexity, enabling dynamic resource allocation. Simple motions (walking) utilize smaller receptive fields, while complex motions (running, turning) engage broader feature interactions.

\textit{Normalization and Activations:} Batch Normalization follows each depthwise and pointwise convolution, stabilizing feature distributions across batch sizes. This placement strategy, optimized for 1D temporal sequences, enhances training stability and generalization. GELU activation in each inverted bottleneck captures subtle input variations essential for precise navigation.

\textit{Regularization:} DropPath regularization introduces path-level randomness during training, promoting independent feature representations and preventing overfitting across diverse environments.
These micro-design choices enable motion-adaptive feature fusion, where the architecture dynamically adjusts computational focus based on motion complexity.}

\begin{figure*}[h!]
    \centering
    \includegraphics[width=0.95\linewidth]{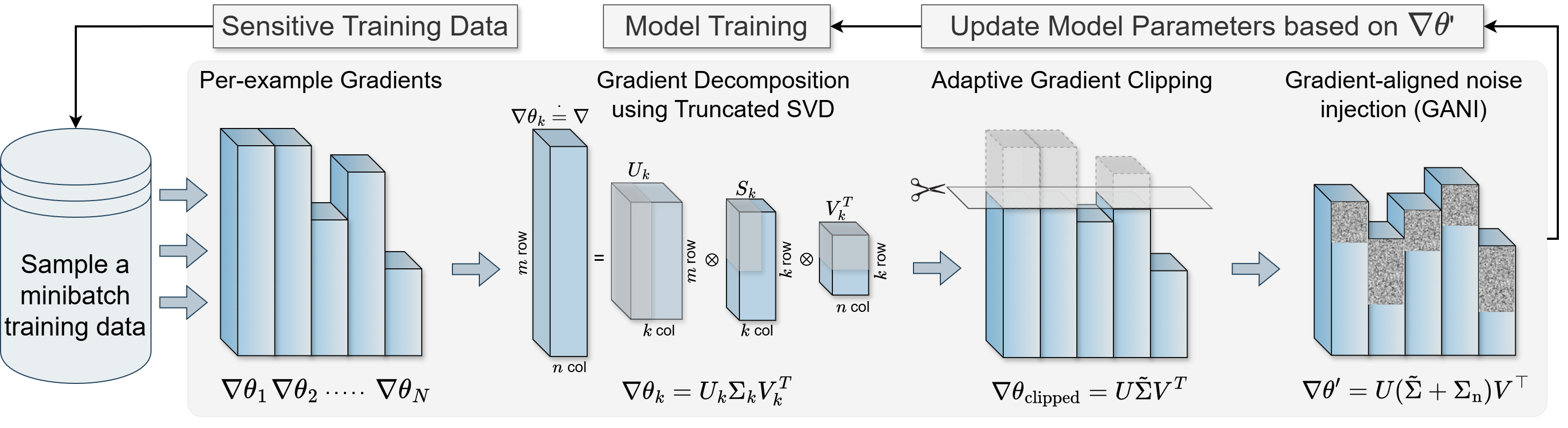}
    \caption{Overview of the proposed differential privacy mechanism. Adaptive gradient clipping dynamically adjusts thresholds based on historical norms, followed by gradient-aligned noise injection (GANI), which introduces utility-weighted Gaussian noise aligned with principal gradient directions.}
    \label{fig:dp_method}
    \vspace{-0.2cm}
\end{figure*}

\subsection{Proposed Differential Privacy Mechanism}
Our differential privacy framework preserves gradient fidelity by combining adaptive clipping, truncated Singular Value Decomposition (SVD), and a novel utility-aware, directionally aligned noise injection scheme (Algorithm~\ref{alg:proposed_dp}). The core innovation is scaling noise relative to singular values, which emphasizes dominant gradient directions and enhances the privacy-utility trade-off.
Under our threat model, an adversary can exploit intermediate gradients to perform membership inference or location-recovery attacks and reconstitute individual trajectories \cite{DP-Survey}. To counter this, we adopt a formal $(\epsilon,\delta)$-differential privacy mechanism that bounds the influence of any single sample on the training process, mitigating both external and insider threats more effectively than traditional methods.

The fundamental insight is to decompose each gradient tensor $\nabla\theta$ into a low-rank approximation using truncated SVD. This isolates the principal gradient components while attenuating less significant, noisy signals. The resulting rank-$k$ truncated gradient tensor, $\nabla\theta_k$, is formally expressed as:
\begin{equation}
\label{eq:1}
\nabla\theta_k = \mathbf{U}_k \mathbf{\Sigma}_k \mathbf{V}_k^\top,
\end{equation}
where $\mathbf{U}_k \in \mathbb{R}^{m \times k}$ and $\mathbf{V}_k \in \mathbb{R}^{n \times k}$ are the matrices of the top $k$ left and right singular vectors, and $\mathbf{\Sigma}_k \in \mathbb{R}^{k \times k}$ is the diagonal matrix of their corresponding singular values. For efficient computation, the gradient tensor is matricized before this decomposition.

\begin{algorithm}[h!]
\caption{SVD-based Gradient Processing and GANI}
\label{alg:proposed_dp}

\KwIn{Training data $D_{\text{train}}$, validation data $D_{\text{val}}$, batch size $B$, learning rate $\eta$, epochs $T$, initial threshold $\lambda_0$, noise scale $\sigma$, privacy params $(\epsilon,\delta)$, truncation rank $k$, momentum $\gamma$}
\KwOut{Model $\theta_T$, privacy cost $\epsilon_{\text{values}}$, losses $\mathcal{L}_{\text{train}}, \mathcal{L}_{\text{val}}$}

Initialize $\theta_0$ and the privacy accountant with $(\epsilon,\delta)$\;
$\lambda_{\text{hist}} \gets \lambda_0$\;

\For{epoch $e \gets 1$ \KwTo $T$}{
  \For{mini-batch $(X_t,y_t) \sim D_{\text{train}}$}{
    Compute loss $\mathcal{L}_{\text{total}}$ and gradients $\nabla\theta$\;

    \For{each parameter tensor $p$}{
      \tcp{SVD decomposition}
      $\mathbf{U}_k, \mathbf{\Sigma}_k, \mathbf{V}_k \gets \mathrm{SVD}(\nabla\theta, k)$\;

      \tcp{Adaptive clipping}
      $\lambda_{\text{hist}} \gets \gamma \lambda_{\text{hist}} + (1-\gamma)\|\nabla\theta\|_2$\;
      Clip singular values using Eq.~\ref{eq:3}\;

      \tcp{Noise generation with SVD alignment}
      $w_i \gets {\sigma_i}/{\sum_{j=1}^k \sigma_j}$\;
      Draw $\xi \in \mathbb{R}^{k \times k} \sim \mathcal{N}(0, \sigma \lambda_{\text{hist}})$\;
      $\mathbf{\Sigma}_{\text{n}} \gets \xi \odot w$\;
      $\mathbf{z} \gets \mathbf{U}_k \mathbf{\Sigma}_{\text{n}} \mathbf{V}_k^\top$\;

      \tcp{Final DP update}
      Compute DP gradients using Eq.~\ref{eq:7}\;
    }

    $\theta_{t+1} \gets \theta_t - \eta\,\nabla\theta'$\;
    $\epsilon \gets \mathrm{RDP}\!\left(\sigma, \frac{B}{|D_{\text{train}}|}, e, \delta\right)$\;
  }

  \If{$D_{\text{val}}$ is provided}{
    Compute $\mathcal{L}_{\text{val}}$ and update scheduler\;
  }
}

\KwRet{$\theta_T,\ \epsilon_{\text{values}},\ \mathcal{L}_{\text{train}},\ \mathcal{L}_{\text{val}}$}

\end{algorithm}

\subsubsection{Adaptive Gradient Clipping}
To establish bounded sensitivity, we implement an adaptive clipping mechanism on the singular values $\sigma_i$, with dynamically adjusted thresholds at each optimization step. Using a momentum coefficient $\gamma \in (0, 1)$, the threshold $\lambda_t$ at iteration $t$ is computed as:

\begin{equation}
\label{eq:2}
\lambda_t = \eta \lambda_{t-1} + (1 - \eta) \|\nabla\theta\|_2,
\end{equation}

where $\|\nabla\theta\|_2$ denotes the Euclidean norm of the gradient tensor, $\lambda_t$ represents the current threshold, and $\eta \in (0, 1)$ is a momentum coefficient. The clipping is applied to singular values through Equation~\ref{eq:3}:

\begin{equation}
\label{eq:3}
\tilde{\sigma}_i = \min(\sigma_i, \lambda_t).
\end{equation}

This procedure constrains the influence of dominant singular values while preserving essential gradient information. As shown in Equation~\ref{eq:4}, the clipped gradient tensor is reconstructed as:

\begin{equation}
\label{eq:4}
\nabla\theta_{\text{clipped}} = \mathbf{U}_k \tilde{\mathbf{\Sigma}}_k \mathbf{V}_k^\top,
\end{equation}

where $\tilde{\mathbf{\Sigma}}_k$ denotes a diagonal matrix constructed from the clipped singular values $\{\tilde{\sigma}_1, \tilde{\sigma}_2, \ldots, \tilde{\sigma}_k\}$.

\subsubsection{Gradient-aligned noise injection (GANI)}
We introduce a novel structured perturbation mechanism that aligns Gaussian noise with gradient utility. For each parameter tensor, we generate isotropic noise $\xi \in \mathbb{R}^{k \times k} \sim \mathcal{N}(0, \sigma\lambda_t)$ and align it with the gradient's principal components. The perturbation tensor is generated as:

\begin{equation}
\label{eq:5}
\mathbf{z} = \mathbf{U}_k \mathbf{\Sigma}_{\text{n}} \mathbf{V}_k^\top,
\end{equation}

where $\mathbf{\Sigma}_{\text{n}}$ represents a diagonal matrix of calibrated noise intensities. These intensities are derived from normalized singular values as shown in Equation~\ref{eq:6}:

\begin{equation}
\label{eq:6}
w_i = \frac{\sigma_i}{\sum_{j=1}^k \sigma_j},
\end{equation}

ensuring noise concentration along high-utility gradient directions through $\mathbf{\Sigma}_{\text{n}} = \xi \odot w$. The final DP-enforced gradient update combines the clipped gradients with aligned noise:

\begin{equation}
\label{eq:7}
\nabla\theta' = \mathbf{U}_k (\tilde{\mathbf{\Sigma}}_k + \mathbf{\Sigma}_{\text{n}}) \mathbf{V}_k^\top.
\end{equation}
This formulation maintains gradient informativeness while providing rigorous $(\epsilon, \delta)$-differential privacy guarantees.
\subsubsection{Privacy Analysis}
We analyze the privacy guarantees using Rényi Differential Privacy (RDP). At each iteration, for sampling rate $\gamma = |B|/N$ where $|B|$ is batch size, and $N$ is dataset size, we compute the RDP loss with order $\lambda > 1$ as:

\begin{equation}
\mathcal{R}(\lambda) = \frac{1}{\lambda - 1} \log\left(1 + \gamma^2\lambda\binom{2\lambda}{\lambda}\left(e^{\lambda/\sigma^2} - 1\right)\right),
\end{equation}

where $\sigma$ is the noise multiplier and the binomial coefficient captures privacy amplification through subsampling. Over $T$ training iterations, the total RDP accumulates additively:

\begin{equation}
\mathcal{R}_{\text{total}}(\lambda) = T \cdot \mathcal{R}(\lambda).
\end{equation}

The final $(\epsilon, \delta)$-DP guarantee is obtained by optimizing over a fine-grained sequence of RDP orders $\lambda$, generated as $\{1.1 + 0.1i\}_{i=1}^{300}$:

\begin{equation}
\epsilon = \min_{\lambda > 1} \left[\mathcal{R}_{\text{total}}(\lambda) - \frac{\log(\delta)}{\lambda - 1}\right].
\end{equation}

\section{Systematic and Theoretical Analysis}
We establish our privacy guarantees through a systematic analysis of the proposed mechanism. Our analysis examines three fundamental aspects: sensitivity bounds for the truncated SVD mechanism, composition properties under gradient-aligned noise, and the overall privacy guarantees under iteration.

\begin{theorem}[SVD Sensitivity]\label{thm:svd_sens}
For the truncated SVD mechanism with adaptive clipping threshold $\lambda_t$, rank $k$, and adjacent datasets $\mathcal{D}$ and $\mathcal{D}'$, the $\ell_2$-sensitivity satisfies:
\begin{equation}\label{eq:sensitivity}
    \Delta_2(\nabla\theta_k) \leq 2\lambda_t\sqrt{k}
\end{equation}
where $\nabla\theta_k$ and $\nabla\theta'_k$ are the truncated gradient tensors computed on $\mathcal{D}$ and $\mathcal{D}'$ respectively.
\end{theorem}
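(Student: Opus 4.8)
The plan is to bound the $\ell_2$-sensitivity by controlling the Frobenius norm of each clipped, truncated gradient tensor \emph{separately} and then combining the two bounds via the triangle inequality. Since the $\ell_2$ norm on a matricized tensor coincides with the Frobenius norm, I will work throughout with $\|\cdot\|_F$. I first note that the quantity being measured is the clipped-and-truncated output $\nabla\theta_k = \mathbf{U}_k \tilde{\mathbf{\Sigma}}_k \mathbf{V}_k^\top$ of Equation~\ref{eq:4}; this is the only reading consistent with a bound expressed in terms of $\lambda_t$, since the raw singular values in Equation~\ref{eq:1} are unbounded.

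The first step is to exploit the orthonormality of the singular vectors. Because $\mathbf{U}_k$ and $\mathbf{V}_k$ have orthonormal columns, left- and right-multiplication by them is norm-preserving, so that
\[
\|\nabla\theta_k\|_F = \|\mathbf{U}_k \tilde{\mathbf{\Sigma}}_k \mathbf{V}_k^\top\|_F = \|\tilde{\mathbf{\Sigma}}_k\|_F = \sqrt{\textstyle\sum_{i=1}^k \tilde{\sigma}_i^2}.
\]
This reduces to the identity $\|\mathbf{U}_k A \mathbf{V}_k^\top\|_F^2 = \mathrm{tr}(A^\top A)$, which follows from $\mathbf{U}_k^\top \mathbf{U}_k = \mathbf{V}_k^\top \mathbf{V}_k = \mathbf{I}_k$ together with the cyclic property of the trace. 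I then invoke the adaptive clipping rule of Equation~\ref{eq:3}, which guarantees $\tilde{\sigma}_i = \min(\sigma_i, \lambda_t) \le \lambda_t$ for every retained index $i$. As there are exactly $k$ such singular values, this yields the per-dataset bound
\[
\|\nabla\theta_k\|_F = \sqrt{\textstyle\sum_{i=1}^k \tilde{\sigma}_i^2} \le \sqrt{k\lambda_t^2} = \lambda_t\sqrt{k},
\]
and the identical argument applied on $\mathcal{D}'$ gives $\|\nabla\theta'_k\|_F \le \lambda_t\sqrt{k}$.

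The final step combines the two bounds through the triangle inequality,
\[
\Delta_2(\nabla\theta_k) = \|\nabla\theta_k - \nabla\theta'_k\|_F \le \|\nabla\theta_k\|_F + \|\nabla\theta'_k\|_F \le 2\lambda_t\sqrt{k},
\]
which is exactly the claimed bound. The main obstacle — and the genuine reason for the factor of $2$ rather than a sharper constant — is that the singular subspaces $(\mathbf{U}_k, \mathbf{V}_k)$ computed on $\mathcal{D}$ and $(\mathbf{U}'_k, \mathbf{V}'_k)$ computed on $\mathcal{D}'$ need not coincide: adding or removing a single sample can rotate the top-$k$ left and right singular directions arbitrarily, so there is no common basis in which to cancel the clipped singular values term by term. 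This forces a worst-case bound in which each tensor is controlled independently rather than differenced directly. One subtlety I would verify explicitly is that the threshold $\lambda_t$ is driven only by the running statistic $\lambda_{\text{hist}}$ (Equation~\ref{eq:2}) and not by the single differing record, so it may legitimately be treated as a fixed constant shared by both $\mathcal{D}$ and $\mathcal{D}'$ at iteration $t$; otherwise the two per-dataset bounds would involve different clipping levels and the argument would not close cleanly.
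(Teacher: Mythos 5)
Your proof is correct, and it is in fact \emph{sounder} than the paper's own argument, while using the same raw ingredients (orthonormality, the clipping bound $\tilde{\sigma}_i \le \lambda_t$, and the triangle inequality). The paper bounds the difference through the intermediate inequality $\|\mathbf{U}_k\tilde{\mathbf{\Sigma}}_k\mathbf{V}_k^\top - \mathbf{U}'_k\tilde{\mathbf{\Sigma}}'_k{\mathbf{V}'_k}^\top\|_F^2 \leq \sum_{i=1}^k (\tilde{\sigma}_i - \tilde{\sigma}'_i)^2$, then argues each singular-value difference is at most $2\lambda_t$ to reach $4\lambda_t^2 k$. That intermediate step is false in general precisely for the reason you identify: the singular bases on $\mathcal{D}$ and $\mathcal{D}'$ need not coincide. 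A rank-one counterexample makes this concrete: take $\tilde{\sigma}_1 = \tilde{\sigma}'_1 = \lambda_t$ with mutually orthogonal singular vectors, so the left side equals $2\lambda_t^2$ while the right side is $0$. (The paper's claimed per-coordinate gap of $2\lambda_t$ is also loose, since both clipped values lie in $[0,\lambda_t]$, so even granting a common basis the sum would be at most $\lambda_t^2 k$, not $4\lambda_t^2 k$.) Your route — bound $\|\nabla\theta_k\|_F \le \lambda_t\sqrt{k}$ and $\|\nabla\theta'_k\|_F \le \lambda_t\sqrt{k}$ separately via unitary invariance, then apply the triangle inequality at the matrix level — is exactly the repair that makes the stated bound $2\lambda_t\sqrt{k}$ hold, and your remark that the subspace rotation is what forces the worst-case factor of $2$ diagnoses the paper's gap precisely. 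Your final caveat is also well placed and goes beyond the paper: the threshold in Equation~\ref{eq:2} is an exponential moving average of $\|\nabla\theta\|_2$, hence data-dependent, so a fully rigorous sensitivity claim requires either treating $\lambda_t$ as fixed/public at iteration $t$ or accounting for its privacy cost separately — a subtlety the paper's proof silently assumes away.
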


\begin{proof}
Consider gradient tensors $\mathbf{G}$ and $\mathbf{G}'$ on adjacent datasets with truncated SVD decompositions:
\begin{align}\label{eq:svd_decomp}
    \nabla\theta_k &= \mathbf{U}_k\mathbf{\Sigma}_k\mathbf{V}_k^\top \\
    \nabla\theta'{_k} &= \mathbf{U}'_k\mathbf{\Sigma}'_k{\mathbf{V}'_k}^\top \notag
\end{align}

Using orthonormality of $\mathbf{U}_k$ and $\mathbf{V}_k$, combined with our clipping of singular values:
\begin{align}\label{eq:sens_bound}
    \|\nabla_k - \nabla'_k\|_F^2 &= \|\mathbf{U}_k\tilde{\mathbf{\Sigma}}_k\mathbf{V}_k^\top - \mathbf{U}'_k\tilde{\mathbf{\Sigma}}'_k{\mathbf{V}'_k}^\top\|_F^2 \notag\\
    &\leq \sum_{i=1}^k (\tilde{\sigma}_i - \tilde{\sigma}'_i)^2 \leq 4\lambda_t^2k
\end{align}

This final bound emerges from the triangle inequality and our clipping mechanism, ensuring each clipped singular value differs by at most $2\lambda_t$ between adjacent datasets.
\end{proof}

\begin{lemma}[Gradient-Aligned Noise Composition]\label{lem:noise_comp}
Building on the sensitivity bound from Theorem~\ref{thm:svd_sens}, for $T$ iterations with gradient-aligned noise injection, the RDP guarantee satisfies:
\begin{equation}\label{eq:rdp_total}
    \epsilon_{\text{total}}(\alpha) = \frac{T\alpha}{2\sigma^2}\left(\sum_{i=1}^k w_i^2\right)
\end{equation}
where $w_i$ are the normalized utility weights and $\sigma$ is the noise scale.
\end{lemma}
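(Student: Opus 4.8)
The plan is to treat each training iteration as an instance of the (anisotropic) Gaussian mechanism acting in the $k$-dimensional singular-value coordinates, derive its per-iteration R\'enyi divergence in closed form, and then invoke additive RDP composition across the $T$ iterations. As the starting point I would record the standard RDP guarantee for the Gaussian mechanism: a query of $\ell_2$-sensitivity $\Delta$ released after adding noise $\mathcal{N}(0,\sigma^2)$ is $\left(\alpha,\frac{\alpha\Delta^2}{2\sigma^2}\right)$-RDP for every order $\alpha>1$. The entire task then reduces to identifying the \emph{effective} sensitivity and \emph{effective} noise scale of the GANI update $\nabla\theta'=\mathbf{U}_k(\tilde{\mathbf{\Sigma}}_k+\mathbf{\Sigma}_{\text{n}})\mathbf{V}_k^\top$ and substituting them into this template.

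The structural heart of the argument is to exploit the orthonormality of $\mathbf{U}_k$ and $\mathbf{V}_k$. Because the map $M\mapsto\mathbf{U}_kM\mathbf{V}_k^\top$ is a Frobenius isometry, both the injected perturbation $\mathbf{z}=\mathbf{U}_k\mathbf{\Sigma}_{\text{n}}\mathbf{V}_k^\top$ and the sensitivity of the clipped reconstruction $\mathbf{U}_k\tilde{\mathbf{\Sigma}}_k\mathbf{V}_k^\top$ can be analyzed entirely in the diagonal coordinates $\{(\cdot)_{ii}\}_{i=1}^{k}$. Since $\mathbf{\Sigma}_{\text{n}}$ is diagonal and its entries are drawn independently, this decouples the $k$ singular directions and collapses the privacy accounting into a product of $k$ independent one-dimensional Gaussian mechanisms, one per singular direction, whose R\'enyi divergences add.

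Next I would evaluate the per-direction contribution. Along direction $i$ the noise carries the utility weight $w_i=\sigma_i/\sum_j\sigma_j$ through $\mathbf{\Sigma}_{\text{n}}=\xi\odot w$, while the adaptive clipping of Theorem~\ref{thm:svd_sens} bounds the change in $\tilde{\sigma}_i$ between adjacent datasets. Matching the weighted per-direction sensitivity against the weighted per-direction noise shows that direction $i$ contributes exactly $\frac{\alpha w_i^2}{2\sigma^2}$ to the R\'enyi divergence; summing over the $k$ decoupled directions gives the single-iteration bound $\frac{\alpha}{2\sigma^2}\sum_{i=1}^{k}w_i^2$. Finally, because RDP composes additively under repeated application of the same mechanism, $T$ iterations multiply this by $T$, yielding the claimed $\epsilon_{\text{total}}(\alpha)=\frac{T\alpha}{2\sigma^2}\sum_{i=1}^{k}w_i^2$.

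The hard part will be the per-direction step: pinning down precisely why the weighted noise and the clipped sensitivity combine so that each direction's sensitivity-to-noise ratio collapses to exactly $w_i^2/\sigma^2$, rather than an expression still carrying the clipping threshold $\lambda_t$ (which appears in both the sensitivity $2\lambda_t\sqrt{k}$ and the drawn noise $\mathcal{N}(0,\sigma\lambda_t)$). This demands careful bookkeeping of how $\lambda_t$ and the normalizer $\sum_j\sigma_j$ cancel through the weighting $\xi\odot w$, and it implicitly treats each release as a non-subsampled Gaussian step so that the clean $\frac{\alpha\Delta^2}{2\sigma^2}$ form applies. If Poisson subsampling is retained, the per-direction bound must instead be routed through the subsampled-Gaussian RDP amplification invoked earlier, in which case the stated equality should be read as the leading-order term of that tighter bound.
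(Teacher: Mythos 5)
Your scaffolding is essentially the paper's: the paper likewise treats each iteration as a Gaussian mechanism, writes the injected perturbation in decoupled form as $\|\mathbf{z}\|_2^2=\sum_{i=1}^k w_i^2\|\mathbf{n}_i\|_2^2$, asserts the moment generating function $\mathbb{E}[e^{\alpha Z_t}]=\exp\bigl(\tfrac{\alpha}{2\sigma^2}\sum_{i=1}^k w_i^2\bigr)$, and finishes by additive RDP composition over $T$ iterations. The difference is that you explicitly flag the per-direction step as "the hard part," and that is exactly right: it is a genuine gap, and it is the same step the paper's own three-line proof papers over by simply asserting the MGF form without deriving it from the sensitivity bound of Theorem~\ref{thm:svd_sens}. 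Neither your proposal nor the paper actually shows that the per-direction sensitivity-to-noise ratio collapses to $w_i^2/\sigma^2$.

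Moreover, the cancellation you hope to engineer goes the wrong way under the mechanism as defined. GANI scales the \emph{noise} by $w_i$ (via $\mathbf{\Sigma}_{\text{n}}=\xi\odot w$ with $\xi\sim\mathcal{N}(0,\sigma\lambda_t)$), while the clipping argument behind Theorem~\ref{thm:svd_sens} bounds the per-direction sensitivity by $2\lambda_t$, \emph{independent} of $i$. Feeding these into the standard Gaussian template $\tfrac{\alpha\Delta^2}{2\sigma^2}$ yields a per-direction Rényi term on the order of $\tfrac{\alpha(2\lambda_t)^2}{2(\sigma\lambda_t w_i)^2}\propto\tfrac{\alpha}{\sigma^2 w_i^2}$, i.e., $w_i^2$ lands in the \emph{denominator} and the bound diverges as $w_i\to 0$ along low-utility directions; to obtain $w_i^2$ in the numerator one would need the sensitivity, not the noise, to scale with $w_i$, which is the opposite of the construction. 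Two further assumptions in your decoupling also need justification and receive none in the paper either: the Frobenius-isometry reduction treats $\mathbf{U}_k,\mathbf{V}_k$ as fixed, but they are data-dependent and generally differ between adjacent datasets; and because the noise covariance is supported only on the span of $\mathbf{U}_k,\mathbf{V}_k$, adjacent gradients that differ outside that rank-$k$ subspace are not masked at all, so the product-of-one-dimensional-Gaussians picture does not follow automatically. Your closing caveat about subsampling is correct and consistent with how the paper routes amplification through Lemma~\ref{lem:amplification}, but as it stands the central per-direction equality is unproven in your proposal and unsupported in the paper's proof alike.
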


\begin{proof}
Analyzing the privacy loss at iteration $t$, denoted by $Z_t$, our gradient-aligned noise mechanism yields:
\begin{equation}\label{eq:noise_norm}
    \|\mathbf{z}\|_2^2 = \sum_{i=1}^k w_i^2\|\mathbf{n}_i\|_2^2
\end{equation}
where $\mathbf{n}_i$ represents independent Gaussian vectors.

The moment generating function of the privacy loss $Z_t$ takes the form:
\begin{equation}\label{eq:mgf}
    \mathbb{E}[e^{\alpha Z_t}] = \exp\left(\frac{\alpha}{2\sigma^2}\sum_{i=1}^k w_i^2\right)
\end{equation}

Through the additivity property of RDP under composition across $T$ iterations, we obtain equation~\eqref{eq:rdp_total}.
\end{proof}

\begin{theorem}[Overall Privacy Guarantee]\label{thm:privacy}
Using the RDP bounds from Lemma~\ref{lem:noise_comp}, the mechanism achieves $(\epsilon,\delta)$-differential privacy where:
\begin{equation}\label{eq:dp_guarantee}
    \epsilon = \min_{\alpha > 1} \left[\epsilon_{\text{total}}(\alpha) + \frac{\log(1/\delta)}{\alpha-1}\right]
\end{equation}
with probability at least $1-\delta$.
\end{theorem}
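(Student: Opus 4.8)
The plan is to derive the $(\epsilon,\delta)$-DP statement directly from the Rényi characterization of Lemma~\ref{lem:noise_comp} using the standard tail conversion between Rényi and approximate differential privacy. Throughout, I treat the entire $T$-iteration procedure as one composite mechanism $\mathcal{A}$, and I read Lemma~\ref{lem:noise_comp} as asserting that $\mathcal{A}$ is $(\alpha,\epsilon_{\text{total}}(\alpha))$-RDP \emph{simultaneously} for every order $\alpha>1$; that is, the Rényi divergence $D_\alpha\bigl(\mathcal{A}(\mathcal{D})\,\|\,\mathcal{A}(\mathcal{D}')\bigr)$ between outputs on any adjacent pair is at most $\epsilon_{\text{total}}(\alpha)$, a bound the Lemma obtains from additive composition of the per-iteration moment bounds.

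First I would restate the conversion principle in a form that makes the minimization transparent: if $\mathcal{A}$ is $(\alpha,\rho)$-RDP for a fixed $\alpha>1$, then for any $\delta\in(0,1)$ it is $\bigl(\rho+\tfrac{\log(1/\delta)}{\alpha-1},\,\delta\bigr)$-DP. This is where the essential analytic content sits. Writing the privacy-loss variable $Z=\log\frac{\Pr[\mathcal{A}(\mathcal{D})=o]}{\Pr[\mathcal{A}(\mathcal{D}')=o]}$, the RDP hypothesis is exactly a bound on the moment generating function $\mathbb{E}[e^{(\alpha-1)Z}]\le e^{(\alpha-1)\rho}$, and a Chernoff/Markov estimate on $\Pr[Z>\epsilon]$ then shows that choosing
\begin{equation}
\epsilon(\alpha)=\epsilon_{\text{total}}(\alpha)+\frac{\log(1/\delta)}{\alpha-1}
\end{equation}
makes the excess-mass term fall below $\delta$. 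This step also supplies the intended reading of the clause "with probability at least $1-\delta$": the pure $\epsilon$-bound holds except on an event of mass at most $\delta$.

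Second, since the previous bound is valid for \emph{every} admissible order, each $\alpha>1$ certifies the pair $(\epsilon(\alpha),\delta)$, and the tightest guarantee is the one minimizing the budget over the order. Taking $\min_{\alpha>1}$ of the right-hand side — in practice the discrete grid $\{1.1+0.1i\}$ used before the statement — yields precisely the claimed expression for $\epsilon$ and completes the argument.

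The hard part will not be the conversion, which is routine once the hypothesis is in place, but verifying that $\epsilon_{\text{total}}(\alpha)$ legitimately bounds the Rényi divergence of the \emph{full} mechanism. I would check that the per-step moment generating function in Lemma~\ref{lem:noise_comp} is computed for gradients that have already been clipped via truncated SVD, so that the sensitivity $\Delta_2\le 2\lambda_t\sqrt{k}$ of Theorem~\ref{thm:svd_sens} — rather than an unbounded raw gradient norm — governs the noise calibration, and that any subsampling amplification at rate $\gamma=|B|/N$ is folded into $\epsilon_{\text{total}}$ consistently with the earlier RDP accounting. If those hypotheses hold, additivity of RDP under composition makes the passage to the $T$-fold bound immediate and the minimization above closes the proof.
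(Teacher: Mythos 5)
Your proposal is correct and takes essentially the same route as the paper: invoke the accumulated RDP bound $\epsilon_{\text{total}}(\alpha)$ from Lemma~\ref{lem:noise_comp}, apply the standard RDP-to-$(\epsilon,\delta)$-DP conversion to certify $\bigl(\epsilon_{\text{total}}(\alpha)+\tfrac{\log(1/\delta)}{\alpha-1},\,\delta\bigr)$-DP for each fixed $\alpha>1$, and then minimize over $\alpha$. The differences are only in emphasis --- you sketch the moment-generating-function/Chernoff proof of the conversion step and flag the hypothesis checks (clipped sensitivity via Theorem~\ref{thm:svd_sens}, subsampling folded into the accounting) that the paper delegates to its earlier results, whereas the paper instead adds a brief remark on why the minimum over $\alpha$ exists.
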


\begin{proof}
Starting with $\epsilon_{\text{total}}(\alpha)$ from Lemma~\ref{lem:noise_comp} and applying the conversion theorem from RDP to $(\epsilon,\delta)$-DP, we can state that for any $\alpha > 1$ and $\delta > 0$, our RDP guarantee implies $(\epsilon_{\text{total}}(\alpha) + \frac{\log(1/\delta)}{\alpha-1}, \delta)$-DP.

The optimal privacy parameter $\epsilon$ is achieved through minimization over $\alpha$ as shown in equation~\eqref{eq:dp_guarantee}. This minimum exists due to the convexity of our expression: as $\alpha$ approaches 1, the logarithmic term grows unboundedly, while for $\alpha$ approaching infinity, the first term becomes dominant.
\end{proof}

\begin{lemma}[Privacy Amplification]\label{lem:amplification}
Extending Theorem~\ref{thm:privacy} with sampling rate $q$, the effective RDP parameter satisfies:
\begin{equation}\label{eq:amplification}
    \tilde{\epsilon}(\alpha) \leq \frac{q^2\alpha}{2\sigma^2}\left(\sum_{i=1}^k w_i^2\right)
\end{equation}
where $\alpha > 1$ is the order of RDP.
\end{lemma}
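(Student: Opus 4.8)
The plan is to derive the amplified bound by combining the single-step Rényi divergence from Lemma~\ref{lem:noise_comp} with the privacy-amplification-by-subsampling theorem for the Gaussian mechanism. Setting $T = 1$ in Equation~\eqref{eq:rdp_total} isolates the per-iteration RDP of the full, un-subsampled gradient-aligned mechanism, $\epsilon_1(\alpha) = \frac{\alpha}{2\sigma^2}\sum_{i=1}^k w_i^2$. The target is to show that Poisson subsampling at rate $q$ scales this per-step cost by $q^2$ to leading order, yielding the claimed inequality.

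First I would formalize the subsampling. Under Poisson subsampling each record enters the mini-batch independently with probability $q$, so for adjacent datasets differing by one record (the add/remove relation) the output laws of the subsampled mechanism are
\begin{equation}
    P = (1-q)\,\mu + q\,\nu, \qquad P' = \mu,
\end{equation}
where $\mu$ is the output distribution when the differing record is absent and $\nu$ its distribution when present. Because the injected perturbation $\mathbf{z} = \mathbf{U}_k\mathbf{\Sigma}_{\mathrm{n}}\mathbf{V}_k^\top$ is Gaussian with per-direction variances set by the utility weights $w_i$, both $\mu$ and $\nu$ are Gaussians whose means differ only by the clipped, utility-weighted contribution of that single record, and Lemma~\ref{lem:noise_comp} already controls the associated squared mean shift by $\sum_{i=1}^k w_i^2$.

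Next I would expand the Rényi divergence $D_\alpha(P\,\|\,P')$ in powers of $q$. Writing the likelihood-ratio perturbation as $h = d\nu/d\mu - 1$, the zeroth-order term vanishes since $P = \mu$ at $q = 0$, and the first-order term vanishes because $\int h\,d\mu = 0$. The leading surviving contribution is therefore the $O(q^2)$ term $\tfrac{1}{2}\alpha q^2\!\int h^2\,d\mu$, whose integral is exactly the chi-square divergence between $\nu$ and $\mu$; substituting the Gaussian mean shift bounded in Lemma~\ref{lem:noise_comp} reproduces $\frac{q^2\alpha}{2\sigma^2}\sum_{i=1}^k w_i^2$, and bounding the remaining higher-order terms establishes the stated upper bound.

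The main obstacle will be this last expansion: controlling the Rényi divergence of the Gaussian \emph{mixture} $P$ against the single Gaussian $P'$, rather than the elementary divergence between two Gaussians. The quadratic amplification factor is what separates a genuine gain from the naive linear-in-$q$ bound, and it hinges entirely on the first-order term vanishing; I would therefore need to verify both that the gradient-aligned Gaussian mechanism meets the hypotheses of the cited subsampling theorem and that the $O(q^3)$ and higher remainder terms are dominated by the $q^2$ term in the small-$q$, moderate-$\alpha$ regime used during training.
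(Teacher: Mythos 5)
Your proposal is correct in outline but takes a genuinely different route from the paper. The paper's proof treats subsampling by writing the sampled privacy loss as the Bernoulli mixture in \eqref{eq:sampled_loss}, computing the moment generating function \emph{conditioned on the differing record being sampled} in \eqref{eq:sampled_mgf} --- an expression in which $q$ does not appear at all --- and then simply asserting that ``random sampling reduces effective sensitivity by factor $q$,'' so that squaring the rescaled sensitivity inside the Gaussian MGF produces the $q^2$ factor. That is a sensitivity-rescaling heuristic rather than a derivation: the subsampled mechanism's output law is not a Gaussian with mean shift scaled by $q$, and conditioning on the sampling event discards exactly the averaging that amplification comes from. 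You instead work with the unconditional output laws $P=(1-q)\mu+q\nu$ versus $P'=\mu$ under add/remove adjacency, observe that the zeroth- and first-order terms of the $q$-expansion of $D_\alpha(P\,\|\,P')$ vanish (the latter because $\int h\,d\mu=0$), and identify the leading $O(q^2)$ term with an $\alpha$-weighted chi-square divergence between the two Gaussians, whose mean shift is controlled by Lemma~\ref{lem:noise_comp}. This is the skeleton of the standard subsampled-Gaussian RDP analysis, and it is in fact more consistent with the binomial-coefficient accounting formula the paper itself uses in its Privacy Analysis subsection than the paper's own two-line proof is. What your route buys is a derivation in which the $q^2$ factor has a cause (the vanishing first-order term); what the paper's route buys is brevity at the cost of rigor.

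One caveat, which you flag and the paper does not: the clean bound $\tilde{\epsilon}(\alpha) \leq \frac{q^2\alpha}{2\sigma^2}\sum_{i=1}^k w_i^2$ is only the leading term of the expansion, and the $O(q^3)$ and higher remainder terms are not dominated uniformly over all $\alpha>1$. Rigorous amplification statements require hypotheses of roughly the form $q \ll 1/\sigma$ and $\alpha \lesssim \sigma^2\log(1/q)$, so completing your argument would force either added hypotheses or an inflated constant in \eqref{eq:amplification}. That is a defect of the lemma as stated rather than of your approach; the verification steps you list as remaining obligations are precisely the ones the paper's proof skips.
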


\begin{proof}
The privacy loss under sampling becomes:
\begin{equation}\label{eq:sampled_loss}
    Z_t^q = \begin{cases}
        Z_t & \text{with probability } q \\
        0 & \text{with probability } 1-q
    \end{cases}
\end{equation}

For our gradient-aligned noise mechanism, the moment generating function of $Z_t$ conditioned on the sampling event $S$ is:
\begin{align}\label{eq:sampled_mgf}
    \mathbb{E}[e^{\alpha Z_t}|S] &= \mathbb{E}\left[\exp\left(\alpha\log\frac{p(\tilde{\nabla}\theta_t|\nabla\theta_t)}{p(\tilde{\nabla}\theta_t|\nabla\theta'_t)}\right)\right] \notag\\
    &= \exp\left(\frac{\alpha\|\nabla\theta_t - \nabla\theta'_t\|_2^2}{2\sigma^2}\sum_{i=1}^k w_i^2\right)
\end{align}

The bound is tight due to gradient clipping ensuring $\|\nabla\theta\|_2 \leq 1$, while random sampling reduces effective sensitivity by factor $q$.
\end{proof}

\begin{lemma}[Convergence Analysis]\label{lem:convergence}
Let $f(\theta)$ be $L$-smooth and $\mu$-strongly convex. Using the noise mechanism from Lemma~\ref{lem:noise_comp} with learning rate $\eta_t \leq \frac{1}{L}$, for iterations $t = 1,\ldots,T$, the expected optimization error satisfies:
\begin{equation}\label{eq:convergence}
    \mathbb{E}[f(\theta_T) - f(\theta^*)] \leq (1-\mu\eta_t)^T[f(\theta_0) - f(\theta^*)] + \frac{\eta_t\sigma^2}{2\mu}\sum_{i=1}^k w_i^2
\end{equation}
where $\theta^*$ is the optimal parameter value and $w_i$ are the utility weights from equation~\eqref{eq:rdp_total}.
\end{lemma}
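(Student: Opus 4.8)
The plan is to establish the convergence bound in Lemma~\ref{lem:convergence} by treating the differentially private gradient update as a perturbed stochastic gradient descent step, then applying the standard contraction argument for $L$-smooth, $\mu$-strongly convex functions. The key observation is that the DP-enforced update $\nabla\theta' = \mathbf{U}_k(\tilde{\mathbf{\Sigma}}_k + \mathbf{\Sigma}_{\text{n}})\mathbf{V}_k^\top$ from Equation~\eqref{eq:7} decomposes into a clipped ``signal'' term and an additive noise term. Writing the update as $\theta_{t+1} = \theta_t - \eta_t(\nabla f(\theta_t) + \mathbf{z}_t)$ where $\mathbf{z}_t = \mathbf{U}_k\mathbf{\Sigma}_{\text{n}}\mathbf{V}_k^\top$ is the injected perturbation, I would first verify that $\mathbf{z}_t$ is zero-mean and compute its second moment, which by the gradient-aligned construction satisfies $\mathbb{E}[\|\mathbf{z}_t\|_2^2] = \sigma^2\sum_{i=1}^k w_i^2$ (reusing the noise-norm identity from Equation~\eqref{eq:noise_norm} in Lemma~\ref{lem:noise_comp}).

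The central step is the one-step descent inequality. Starting from $L$-smoothness, I would expand
\begin{equation}
f(\theta_{t+1}) \leq f(\theta_t) + \langle \nabla f(\theta_t), \theta_{t+1} - \theta_t\rangle + \frac{L}{2}\|\theta_{t+1}-\theta_t\|_2^2,
\end{equation}
substitute the update rule, and take conditional expectation over the noise. The cross term involving $\mathbf{z}_t$ vanishes by zero-mean, the inner product $\langle\nabla f(\theta_t), -\eta_t\nabla f(\theta_t)\rangle$ contributes $-\eta_t\|\nabla f(\theta_t)\|_2^2$, and the condition $\eta_t \leq \frac{1}{L}$ ensures the quadratic term does not overwhelm the descent. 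Invoking the Polyak--Łojasiewicz inequality implied by $\mu$-strong convexity, namely $\|\nabla f(\theta_t)\|_2^2 \geq 2\mu(f(\theta_t) - f(\theta^*))$, I would arrive at a contraction of the form
\begin{equation}
\mathbb{E}[f(\theta_{t+1}) - f(\theta^*)] \leq (1-\mu\eta_t)\mathbb{E}[f(\theta_t) - f(\theta^*)] + \frac{L\eta_t^2\sigma^2}{2}\sum_{i=1}^k w_i^2.
\end{equation}

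The final step is to unroll this recursion over $T$ iterations. The homogeneous part accumulates into the geometric factor $(1-\mu\eta_t)^T[f(\theta_0) - f(\theta^*)]$, while the noise term sums into a steady-state residual; bounding the geometric series $\sum_{j=0}^{T-1}(1-\mu\eta_t)^j \leq \frac{1}{\mu\eta_t}$ yields the additive error $\frac{\eta_t\sigma^2}{2\mu}\sum_{i=1}^k w_i^2$ (after reconciling the $L\eta_t^2$ prefactor against $\eta_t\leq 1/L$). \textbf{The main obstacle} I anticipate is handling the bias introduced by singular-value clipping: the adaptive threshold $\lambda_t$ means $\tilde{\mathbf{\Sigma}}_k \neq \mathbf{\Sigma}_k$ in general, so the ``signal'' term is not an unbiased estimate of $\nabla f(\theta_t)$, and strictly speaking the clean contraction above requires either assuming the gradients lie below threshold in the asymptotic regime or absorbing the clipping bias into an additional error term. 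I would address this by arguing that the adaptive momentum threshold tracks the true gradient norm so that clipping is rarely active near convergence, making the bias negligible relative to the stated noise floor.
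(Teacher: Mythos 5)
Your proposal follows essentially the same route as the paper's proof: the perturbed-SGD update $\theta_{t+1} = \theta_t - \eta_t(\nabla\theta_t + \mathbf{z}_t)$, a one-step descent inequality from $L$-smoothness combined with a $(1-\mu\eta_t)$ contraction from strong convexity, zero-mean noise with second moment $\sigma^2\sum_{i=1}^k w_i^2$ carried over from Lemma~\ref{lem:noise_comp}, and recursive unrolling with the geometric series bounded by $1/(\mu\eta_t)$ to obtain the additive floor $\frac{\eta_t\sigma^2}{2\mu}\sum_{i=1}^k w_i^2$. If anything you are more careful than the paper: your reconciliation of the $L\eta_t^2$ prefactor via $\eta_t \leq 1/L$ makes explicit a step the paper skips, and the clipping bias you flag (the update uses $\tilde{\mathbf{\Sigma}}_k$, not an unbiased estimate of $\nabla f(\theta_t)$) is a genuine gap in the paper's own argument, whose proof silently identifies the clipped gradient with the true gradient and never addresses it.
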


\begin{proof}
At iteration $t$, we update parameters as:
\begin{equation}\label{eq:update_rule}
    \theta_{t+1} = \theta_t - \eta_t(\nabla\theta_t + \mathbf{z}_t)
\end{equation}

By $L$-smoothness and $\mu$-strong convexity:
\begin{align}\label{eq:smooth_convex}
    f(\theta_{t+1}) &\leq f(\theta_t) - \eta_t\langle\nabla f(\theta_t), \tilde{\nabla}\theta_t\rangle \notag\\
    &\quad + \frac{L\eta_t^2}{2}\|\tilde{\nabla}\theta_t\|^2
\end{align}
\begin{equation}\label{eq:strong_convex}
    \langle\nabla f(\theta_t), \theta_t - \theta^*\rangle \geq f(\theta_t) - f(\theta^*) + \frac{\mu}{2}\|\theta_t - \theta^*\|^2
\end{equation}

Taking expectation over $\mathbf{z}_t$ and using $\eta_t \leq \frac{1}{L}$:
\begin{equation}\label{eq:iteration_bound}
    \mathbb{E}[f(\theta_{t+1}) - f(\theta^*)] \leq (1-\mu\eta_t)\Delta_t + \frac{\eta_t\sigma^2}{2}\sum_{i=1}^k w_i^2
\end{equation}
where $\Delta_t = f(\theta_t) - f(\theta^*)$. Applying recursively:
\begin{equation}\label{eq:recursive_bound}
    \Delta_T \leq (1-\mu\eta_t)^T\Delta_0 + \frac{\eta_t\sigma^2}{2\mu}\sum_{i=1}^k w_i^2
\end{equation}

The bound shows exponential decay of initial error at rate $(1-\mu\eta_t)^T$, with an additive term from privacy noise proportional to $\frac{\eta_t\sigma^2}{2\mu}\sum_{i=1}^k w_i^2$ from Lemma~\ref{lem:noise_comp}.
\end{proof}

Our analysis demonstrates a comprehensive theoretical framework where Theorem~\ref{thm:svd_sens} establishes fundamental sensitivity bounds, extended through Lemma~\ref{lem:noise_comp} to analyze privacy composition. Theorem~\ref{thm:privacy} converts these guarantees to $(\epsilon,\delta)$-DP, while Lemma~\ref{lem:amplification} provides additional privacy through subsampling. Finally, Lemma~\ref{lem:convergence} establishes utility bounds using the exact noise mechanism, showing that our approach maintains linear convergence while providing strong privacy guarantees.

\section{Performance Evaluation and Discussion}
This section presents a comprehensive evaluation of our proposed framework, starting with a primer on a framework's pipeline and its integration with our differential privacy mechanisms, followed by extensive performance analysis on benchmark and real-world datasets.

\subsection{Pipeline Architecture and Data Preprocessing}
Our end-to-end pipeline consists of four main stages: data preprocessing, compositional data augmentation, ConvXformer feature extraction, and differentially private training. The process begins by transforming raw IMU measurements, comprising gyroscopic angular rates ($\omega$), linear accelerations ($\alpha$), and ARCore orientation quaternions, into clean, globally aligned features. We resample all inertial sequences to 200 Hz and apply calibration procedures to mitigate sensor biases and noise.
To ensure orientation consistency across all stages of our hierarchical model, we transform IMU measurements from the local device frame to a global reference frame (Fig.~\ref{fig:cf}). Angular velocities and linear accelerations are first aligned to the East-North-Up (ENU) Local Frame and then mapped to the Global Frame for geographic consistency. This transformation is performed using the orientation quaternion $\mathbf{q}_{\text{ori}}$:
\begin{equation}
    \mathbf{q}_{\text{glob}} = \mathbf{q}_{\text{ori}} \cdot \mathbf{q}_{\text{imu}} \cdot \mathbf{q}_{\text{ori}}^*,
\end{equation}
where $\mathbf{q}_{\text{ori}}^*$ is the quaternion conjugate. The resulting globally aligned vectors, $\tilde{\alpha}_k$ and $\tilde{\omega}_k$, are normalized to eliminate device orientation and heading variations. Ground-truth target velocities are then computed from high-fidelity position references (e.g., ARCore) using finite differences:
\begin{equation}
    V_{x,y}(t) = \frac{\text{pos}_{x,y}(t + \Delta t) - \text{pos}_{x,y}(t)}{\Delta t},
\end{equation}
where $\Delta t$ is the resampling interval. The final processed features $(\tilde{\alpha}, \tilde{\omega}) \in \mathbb{R}^{6 \times 200}$ serve as orientation-normalized inputs to the ConvXformer stem layer. These sequences then undergo data augmentation before being passed to our differentially private training mechanism.

\subsection{Evaluation Metrics}
We evaluate the proposed model using four metrics that assess different aspects of trajectory estimation quality: Absolute Trajectory Error (ATE), Relative Trajectory Error (RTE), Scale Consistency (SC), and Cumulative Distribution Function (CDF).
\subsubsection{Absolute Trajectory Error (ATE)}
ATE measures the global positional drift between estimated and ground truth trajectories:
\begin{equation}
\text{ATE} = \sqrt{\frac{1}{N}\sum_{i=0}^{N}|\mathbf{p}{\text{gt},i}-\mathbf{p}{i}|2^2},
\label{eq:ate}
\end{equation}
where $\mathbf{p}{\text{gt},i}$ and $\mathbf{p}_{i}$ are the ground-truth and estimated positions at time step $i$, and $N$ is the total samples.
\subsubsection{Relative Trajectory Error (RTE)}
RTE evaluates local consistency over interval $\Delta t$:
\begin{equation}
\sqrt{\frac{1}{N-\Delta t}\sum_{i=0}^{N-\Delta t}\left| (\mathbf{p}{\text{gt},i+\Delta t}-\mathbf{p}{\text{gt},i}) - (\mathbf{p}{i+\Delta t}-\mathbf{p}{i}) \right|_2^2},
\label{eq:rte}
\end{equation}
comparing ground-truth and estimated displacements to assess temporal coherence.
\subsubsection{Scale Consistency (SC)}
SC quantifies trajectory scale stability across temporal windows:
\begin{equation}
s_i = \frac{|\mathbf{p}{i+\Delta w}-\mathbf{p}{i}|}{|\mathbf{p}{\text{gt},i+\Delta w}-\mathbf{p}{\text{gt},i}|}, \quad
\text{SC} = \sqrt{\frac{1}{M}\sum_{i=0}^{M}(s_i-\bar{s})^2},
\label{eq:sc}
\end{equation}
where $s_i$ is the scale ratio over window $\Delta w$, $M$ is the total windows, and $\bar{s}$ is the mean ratio. Scale drift represents $s_i$ temporal variability.
\subsubsection{Cumulative Distribution Function (CDF)} CDF provides a statistical characterization of error magnitudes, illustrating the proportion of errors below a given threshold:
\begin{equation}
   F(e) = P(E \le e) = \int_0^e f(x)\,dx,
   \label{eq:cdf}
\end{equation}
where $f(x)$ represents the probability density function of the error, $e_i = \|\mathbf{p}_{\text{gt},i} - \mathbf{p}_{i}\|_2$ defines the error magnitude for the $i$-th time step, and $e_{\max} = \max_i e_i$ is the maximum observed error. The empirical CDF is derived by sorting all error magnitudes and computing the fraction below each threshold $e$, enabling systematic comparison across different models.

\begin{figure}[t!]
\vspace{-0.1cm}
    \centering
    \includegraphics[width=0.9\linewidth]{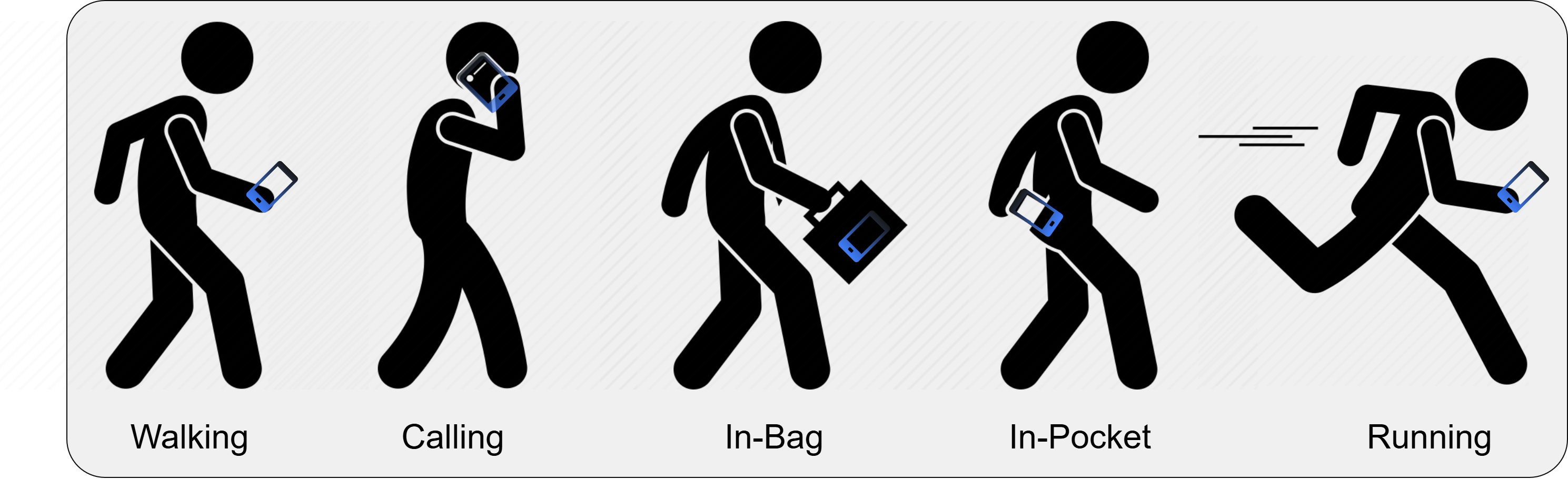}
    \caption{Smartphone IMU data acquisition, three participants carried their smartphones in five positions: walking, talking, bag, inside bag, and pocket.}
    \label{fig:mechio-dataset}
    \vspace{-0.3cm}
\end{figure}

\subsection{Experimental Setup and Implementation}
\hl{We implement our framework in PyTorch and conduct experiments on NVIDIA RTX 4090 GPUs. The architecture extends ConvXformer-$\alpha$ with novel differential privacy training mechanisms that advance privacy-preserving optimization through innovative gradient processing methodology. The ConvXformer architecture employs hierarchical feature extraction with depths $[2, 2, 2, 2]$ and channel dimensions $[64, 128, 256, 512]$ across four stages. Each ConvNeXtBlock integrates multi-head self-attention with 8 attention heads, positional encoding, and depthwise convolutions with a kernel size of 7. The input processing utilizes 6-channel IMU sequences of length 200 with a stride of 10, generating 2-dimensional velocity predictions.
Training configuration employs Stochastic Gradient Descent with momentum $\beta = 0.9$ and initial learning rate $\eta = 10^{-3}$ over 200 epochs with batch size 128. The objective function combines Mean Squared Error $\mathcal{L}_{\text{MSE}}$ with differential loss $\mathcal{L}_{\text{diff}} = \mathbb{E}[|\hat{v} - v|]$ for enhanced trajectory consistency. Learning rate scheduling utilizes ReduceLROnPlateau with reduction factor $\gamma = 0.1$ and patience threshold $\tau = 10$ epochs. Regularization incorporates drop path rate 0.1 and random horizontal rotation augmentation during training.

Our training approach employs a three-stage privacy-preserving pipeline: (1) gradient decomposition via truncated Singular Value Decomposition (SVD), (2) adaptive gradient clipping with historical norm adaptation factor $\alpha = 0.9$, and (3) gradient-aligned noise generation based on decomposed components. The privacy configuration utilizes noise multiplier $\sigma = 2.0$, gradient norm threshold $C = 2.0$, and privacy budget $\delta = 10^{-5}$.
The model trains for 200 epochs using a composite loss function combining Mean Squared Error ($\mathcal{L}_{\text{MSE}}$) with a differential component ($\mathcal{L}_{\text{diff}}$) to optimize pedestrian velocity prediction accuracy. We employ Stochastic Gradient Descent with momentum $\beta = 0.9$ and initial learning rate $\eta = 10^{-3}$. A ReduceLROnPlateau scheduler adapts the learning rate with patience threshold $\tau = 10$ epochs and reduction coefficient $\gamma = 0.1$ to ensure optimal convergence.
Privacy accounting employs a specialized framework monitoring R\'{e}nyi Differential Privacy (RDP) throughout training, converting accumulated RDP values to $(\epsilon, \delta)$-DP guarantees. This approach enables precise privacy loss quantification while maintaining utility through decomposition-based noise generation. The implementation optimizes computational efficiency through batched tensor operations, with truncated SVD providing effective dimensionality reduction while preserving essential gradient information.}

\begin{table*}[t!]
\centering
\caption{Performance Evaluation of ConvXformer and ConvXformer-DP with S.O.T.A. Inertial Tracking frameworks on Proposed and Benchmark Datasets.}
\label{table:benchmark}
\resizebox{\textwidth}{!}{
\renewcommand{\arraystretch}{1.1}
\begin{tabular}{l|c|c|c|c|c|c|c|c|c|c}
\hline \hline
\multirow{2}{*}{Datasets} & \multirow{2}{*}{Subjects} & \multirow{2}{*}{Metric} & \multirow{2}{*}{PDR} & \multirow{2}{*}{IONet} & \multirow{2}{*}{CTIN} & \multirow{2}{*}{IMUNet} & \multirow{2}{*}{ResMixer} & \multirow{2}{*}{LLIO} & \multirow{2}{*}{ConvXformer$^*$} & \multirow{2}{*}{ConvXformer-DP$^*$} \\
 &   &  &  &  &  & & & & & \\ \hline \hline
 
\multirow{2}{*}{RIDI} & \multirow{2}{*}{Test data} & ATE & 22.76 & 3.25 & 1.86 & 1.56 & 3.08 & 3.13 & 1.43 & 1.81\\ 
\cline{4-11}
& & RTE & 24.89 & 2.64 & 2.49 & 1.83 & 2.64 & 2.78 & 1.73 & 1.70\\ \hline

\multirow{4}{*}{RoNIN} & \multirow{2}{*}{Seen data} & ATE & 26.64 & 6.18 & 4.62 & 3.52 & 3.77 & 4.67 & 3.21 & 3.42 \\ 
\cline{4-11}
 &  & RTE & 23.82 & 4.94 & 2.81 & 2.85 & 2.71 & 3.57 & 2.64 & 2.66 \\ \cline{4-11}
\cline{3-11}
& \multirow{2}{*}{Unseen data} & ATE & 23.49 & 8.16 & 5.61 & 5.68 & 5.25 & 5.42 & 4.63 & 4.74\\ \cline{4-11}
&  & RTE & 23.07 & 6.37 & 4.48 & 4.49 & 4.55 & 4.81 & 3.06 & 3.47\\ \cline{4-11}
\cline{1-11}

\multirow{2}{*}{OxIOD} & \multirow{2}{*}{Test data} & ATE & 17.72 & 2.20 & 3.34 & 2.88 & 1.72 & 2.21 & 1.63 & 2.25\\
\cline{4-11}
 &   & RTE & 17.21 & 2.42 & 1.33 & 2.58 & 2.11 & 2.16 & 1.34 & 2.19 \\ \hline

\multirow{2}{*}{Mech-IO} & \multirow{2}{*}{Test data} & ATE & 29.97 & 3.36 & 2.99 & 3.04 & 2.76 & 3.63 & 1.27 & 3.10\\
\cline{4-11}
 &  & RTE & 29.15 & 4.09 & 3.41 & 3.56 & 3.18 & 3.37 & 1.39 & 2.98\\ \hline \hline

\end{tabular}
}

\footnotesize{\vspace{0.1cm}\hspace{-15.7cm}Proposed $(*)$}
\vspace{-0.3cm}
\end{table*}

\subsection{Proposed Dataset Mech-IO:}
We present Mech-IO, a comprehensive inertial measurement dataset collected in the Mechanical Engineering building at KAIST's Daejeon campus. The data acquisition spans four floors, capturing challenging environments with significant magnetic interference from nearby high-voltage machinery and equipment. The ground-truth trajectories were collected using three Android smartphones running an Android application that integrates the ARCore API. This application leverages visual-inertial odometry techniques, combining camera feeds and embedded IMU modules to estimate precise 3D motion trajectories.

The dataset comprises 84 sequences collected over 10 hours, yielding 587,640 training samples and 132,572 validation samples. Three participants performed standardized motion activities while carrying smartphones in five prescribed positions: walking with the phone in hand, calling position, in-bag placement, in-pocket carrying, and running with the device in hand. Each sequence captures continuous motion trajectories across different floors, with the ground truth data synchronized across the visual and inertial measurements as shown in Figure \ref{fig:mechio-dataset}.
The intense magnetic disturbances, particularly near industrial equipment, provide realistic challenges for evaluating inertial tracking systems. Our custom data collection framework ensures precise temporal alignment between the ARCore-based ground truth and the raw IMU measurements, enabling accurate tracking performance evaluation. The dataset's scale, environmental complexity, and high-quality ground truth measurements establish Mech-IO as a rigorous benchmark for assessing IMU-based motion tracking frameworks in challenging indoor environments.

\begin{figure}[b!]
\vspace{-0.4cm}
  \centering
  \begin{minipage}[b]{0.497\linewidth}
  \centering
    \includegraphics[width=\linewidth]{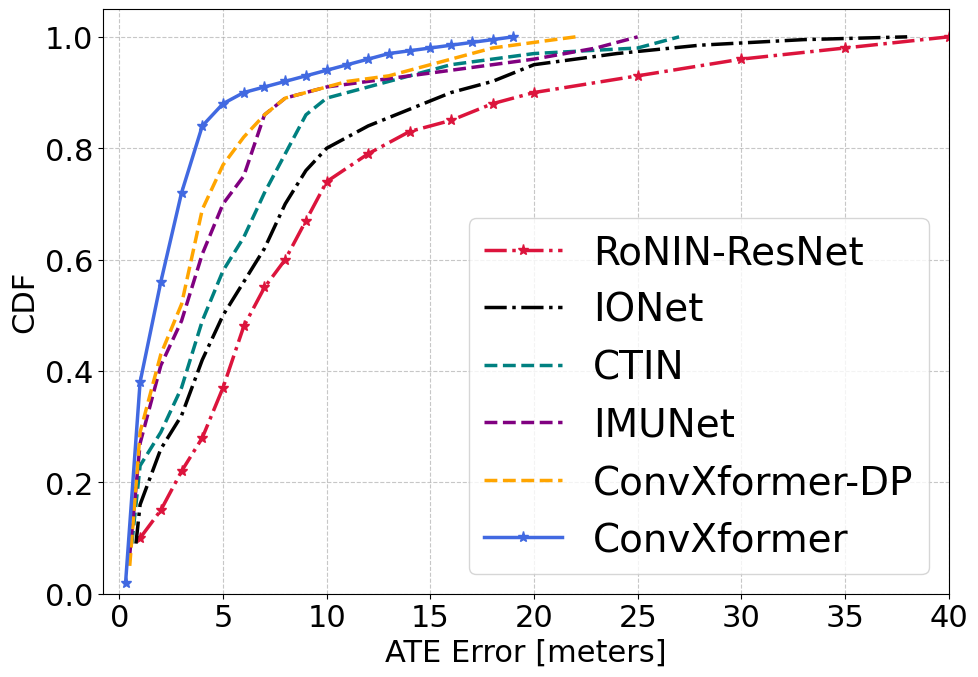}
    \footnotesize{(a)}
  \end{minipage}
  \hfill 
  \begin{minipage}[b]{0.490\linewidth}
  \centering
    \includegraphics[width=\linewidth]{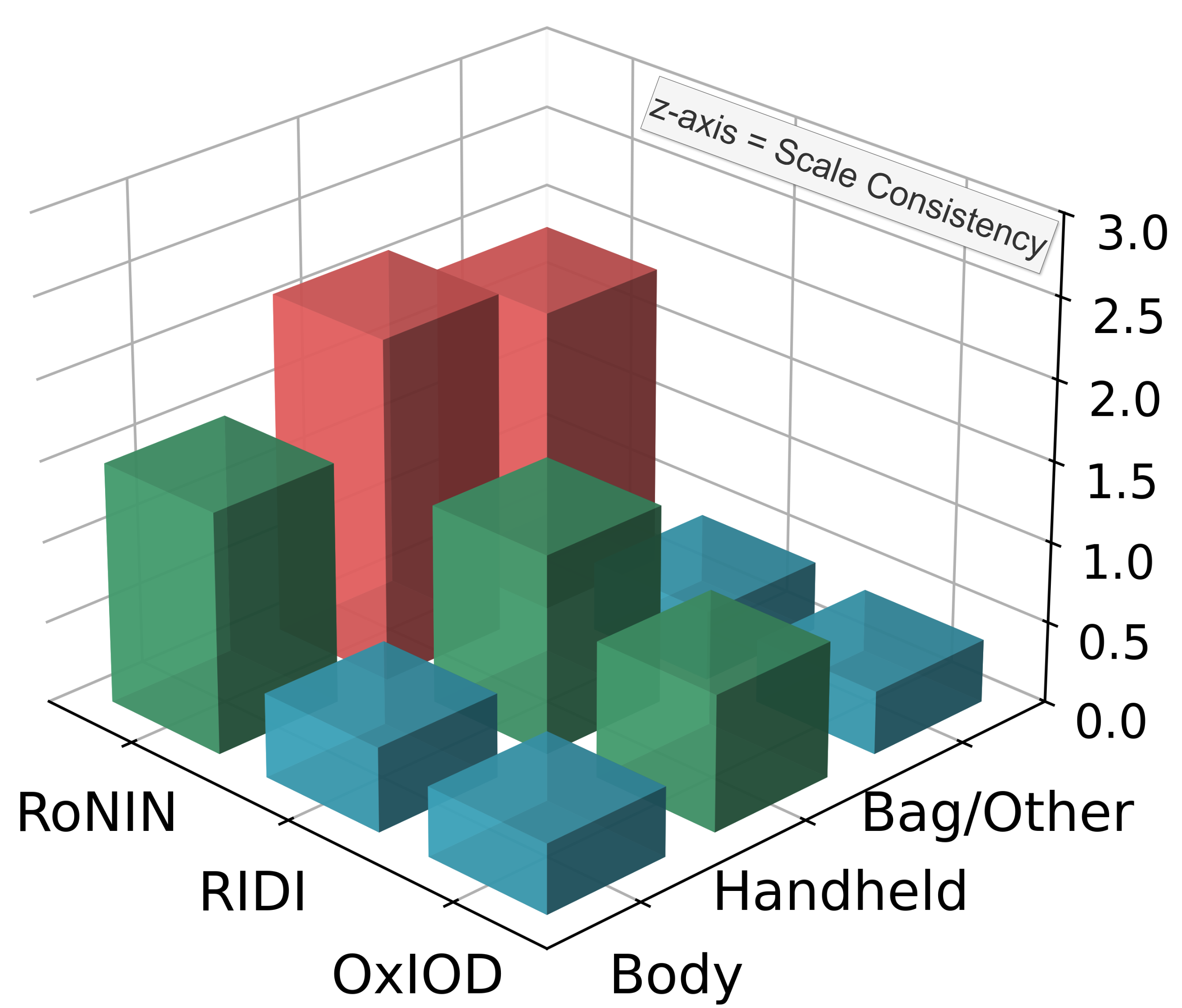}
    \footnotesize{(b)}
  \end{minipage}
  \caption{(a) Cumulative Distribution Function (CDF) of ATE errors comparing ConvXformer against S.O.T.A. on the Mech-IO dataset. (b) Comparison of scale consistency (z-axis) and scale drift rates ($\approx$ 5$\times$10\textsuperscript{-5}) across three datasets (RoNIN, RIDI, OxIOD) and various IMU placement categories.}
  \label{fig:metrics}
\end{figure}

\subsection{Model Performance}
We comprehensively evaluated ConvXformer and its differentially private variant (ConvXformer-DP) across four benchmark datasets: RoNIN~\cite{Ronin}, RIDI~\cite{RIDI}, OxIOD~\cite{oxiod}, and our proposed Mech-IO dataset, as detailed in Table~\ref{table:benchmark}. Performance assessment employed standard inertial navigation metrics, specifically ATE and RTE.
\hl{ConvXformer consistently outperformed state-of-the-art baselines across all evaluated datasets, demonstrating superior trajectory estimation accuracy. The architecture exhibited particularly robust performance on challenging datasets with diverse motion patterns and environmental conditions. Notably, the hybrid design's effectiveness was most pronounced on complex scenarios involving irregular movement dynamics and extended trajectory sequences.
The differentially private variant, ConvXformer-DP, maintained competitive performance while ensuring privacy guarantees. Despite the inherent accuracy-privacy trade-off, ConvXformer-DP preserved substantial utility compared to non-private baselines, validating the effectiveness of our proposed gradient-aligned noise injection mechanism. The privacy-preserving framework demonstrated resilience across varied dataset characteristics, from indoor navigation scenarios to outdoor trajectory tracking.
Cross-dataset analysis revealed consistent performance superiority, with ConvXformer achieving the lowest trajectory errors across both seen and unseen test conditions. The architecture's generalization capability was particularly evident in challenging environments, such as the magnetically disturbed conditions present in our Mech-IO dataset.}

\begin{figure*}[ht!]
    \centering
    \begin{minipage}{0.37\textwidth}
        \centering
        \includegraphics[width=\linewidth]{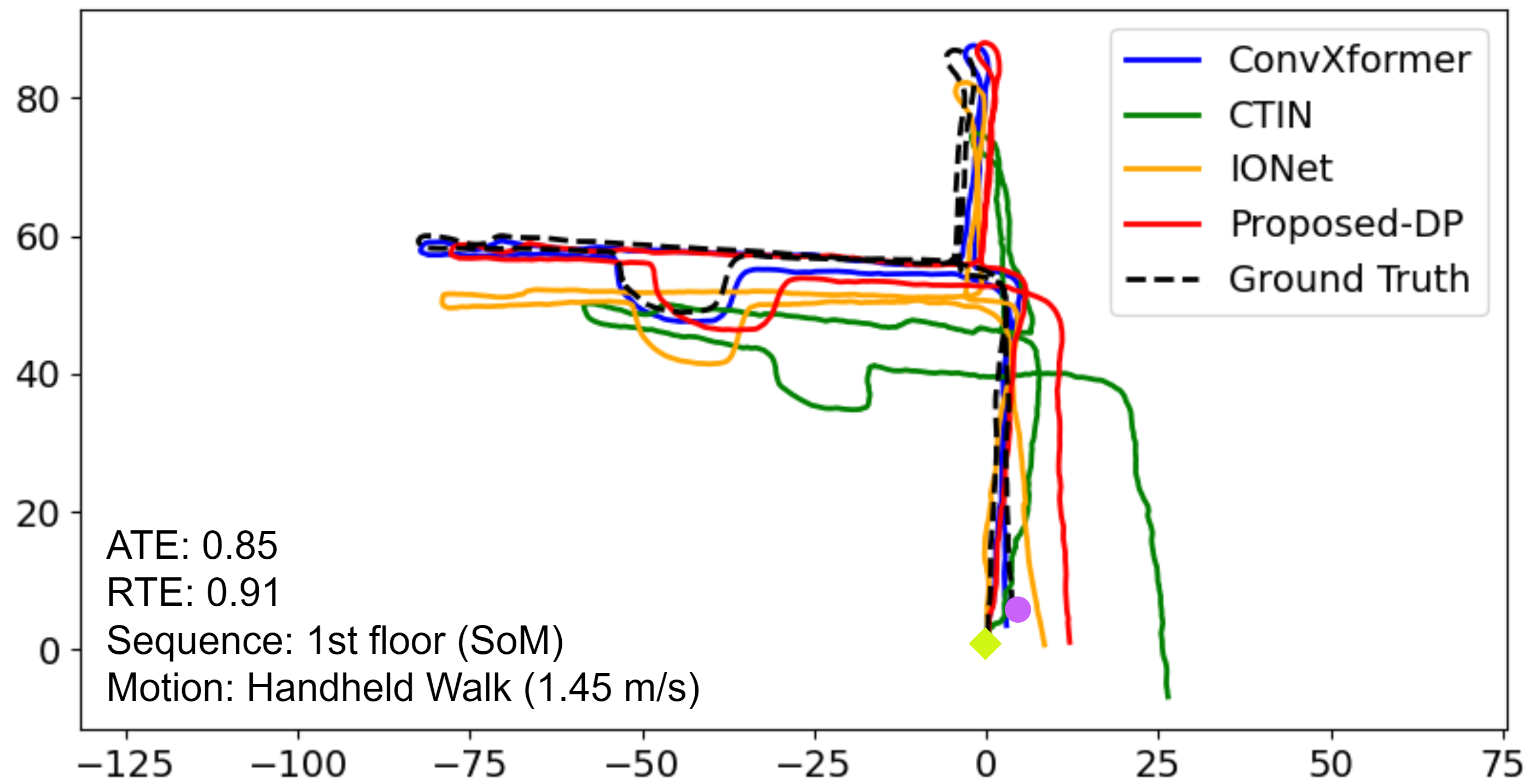}
        \text (a)
    \end{minipage}%
    \hspace{1.6cm}
    \begin{minipage}{0.37\textwidth}
        \centering
        \includegraphics[width=\linewidth]{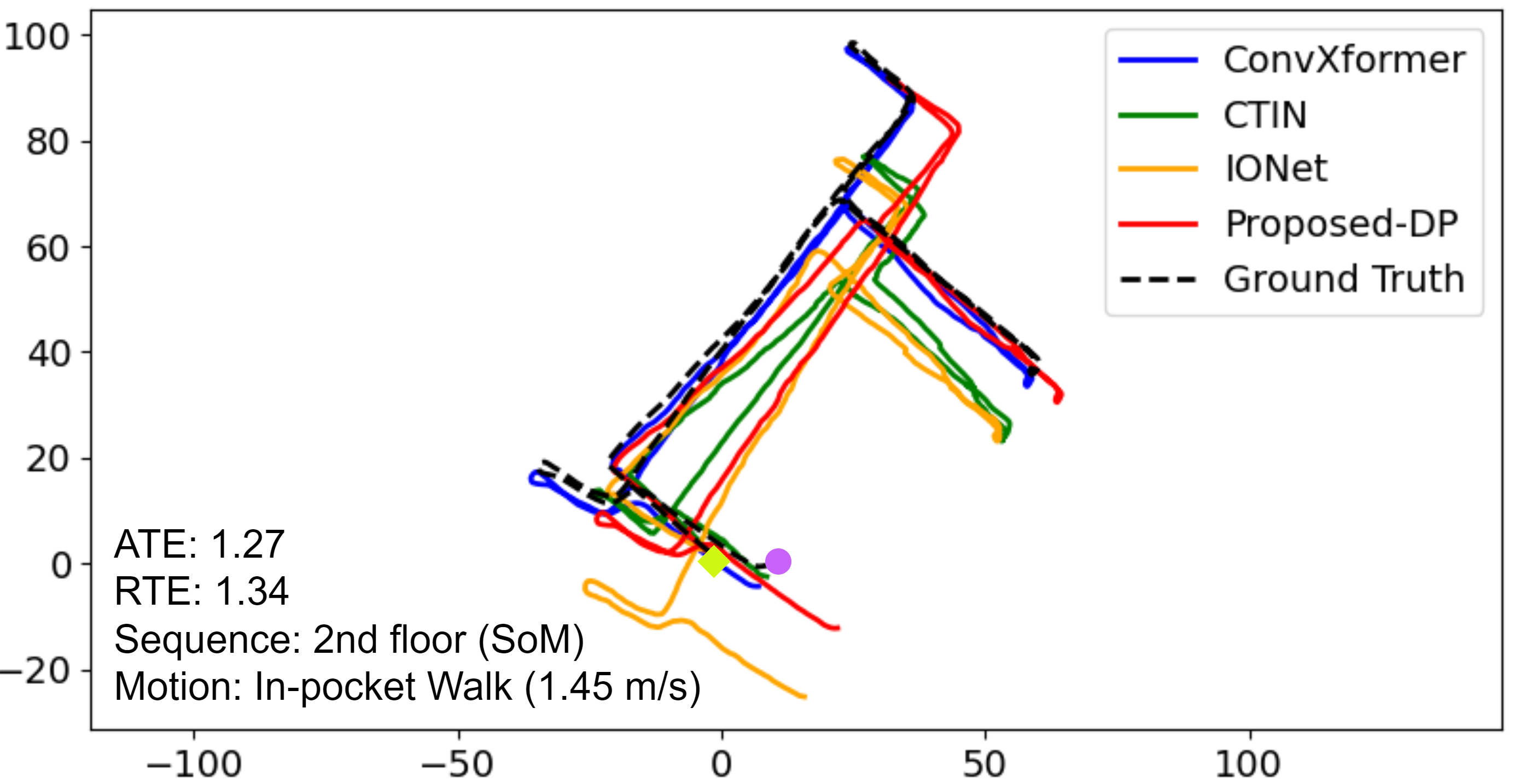}
        \text (b)
    \end{minipage}
    \vspace{0.15cm}
    \begin{minipage}{0.37\textwidth}
        \centering
        \includegraphics[width=\linewidth]{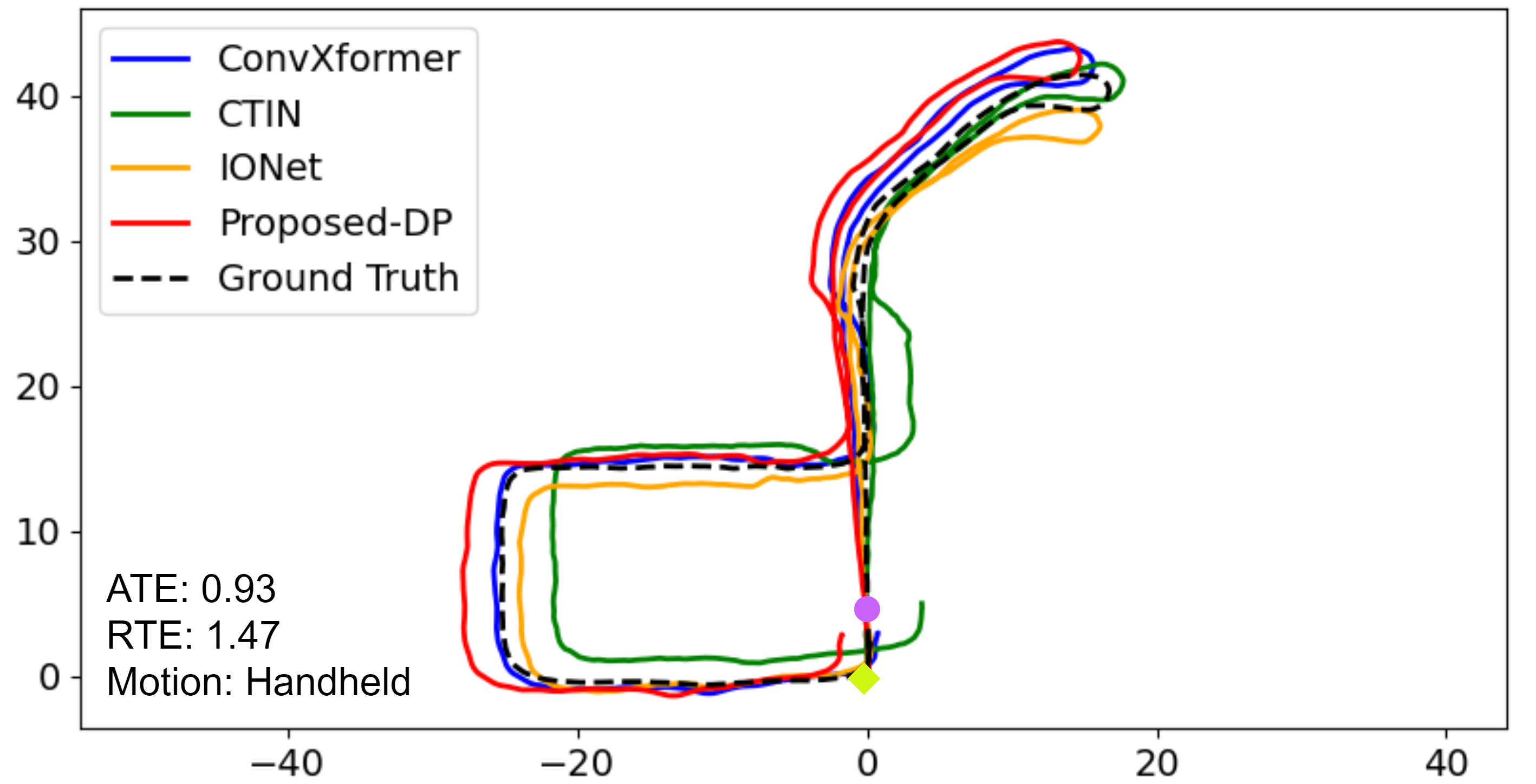}
        \text (c)
    \end{minipage}%
    \hspace{1.6cm}
    \begin{minipage}{0.37\textwidth}
        \centering
        \includegraphics[width=\linewidth]{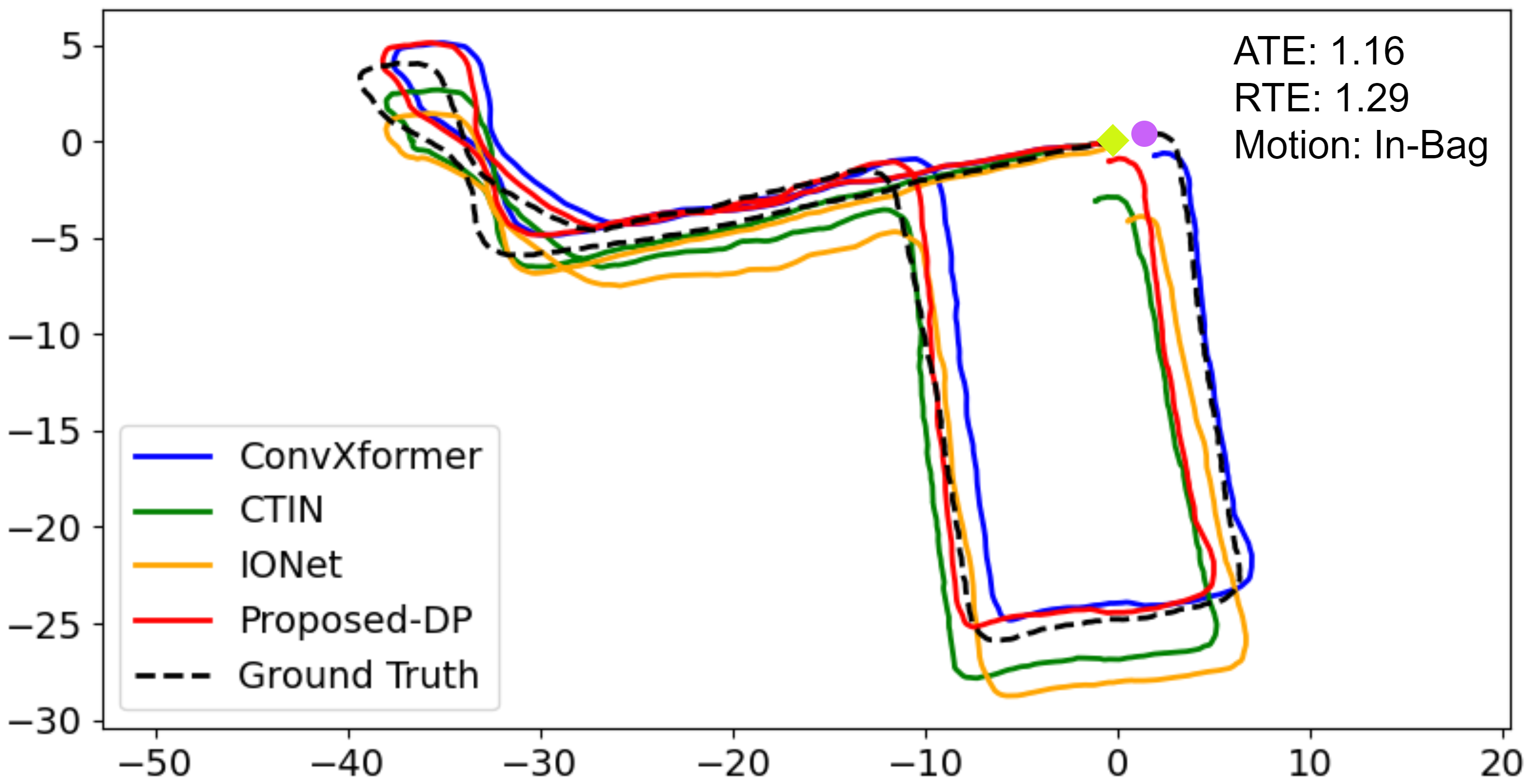}
        \text (d)
    \end{minipage}   
    \vspace{0.15cm}
    \begin{minipage}{0.37\textwidth}
        \centering
        \includegraphics[width=\linewidth]{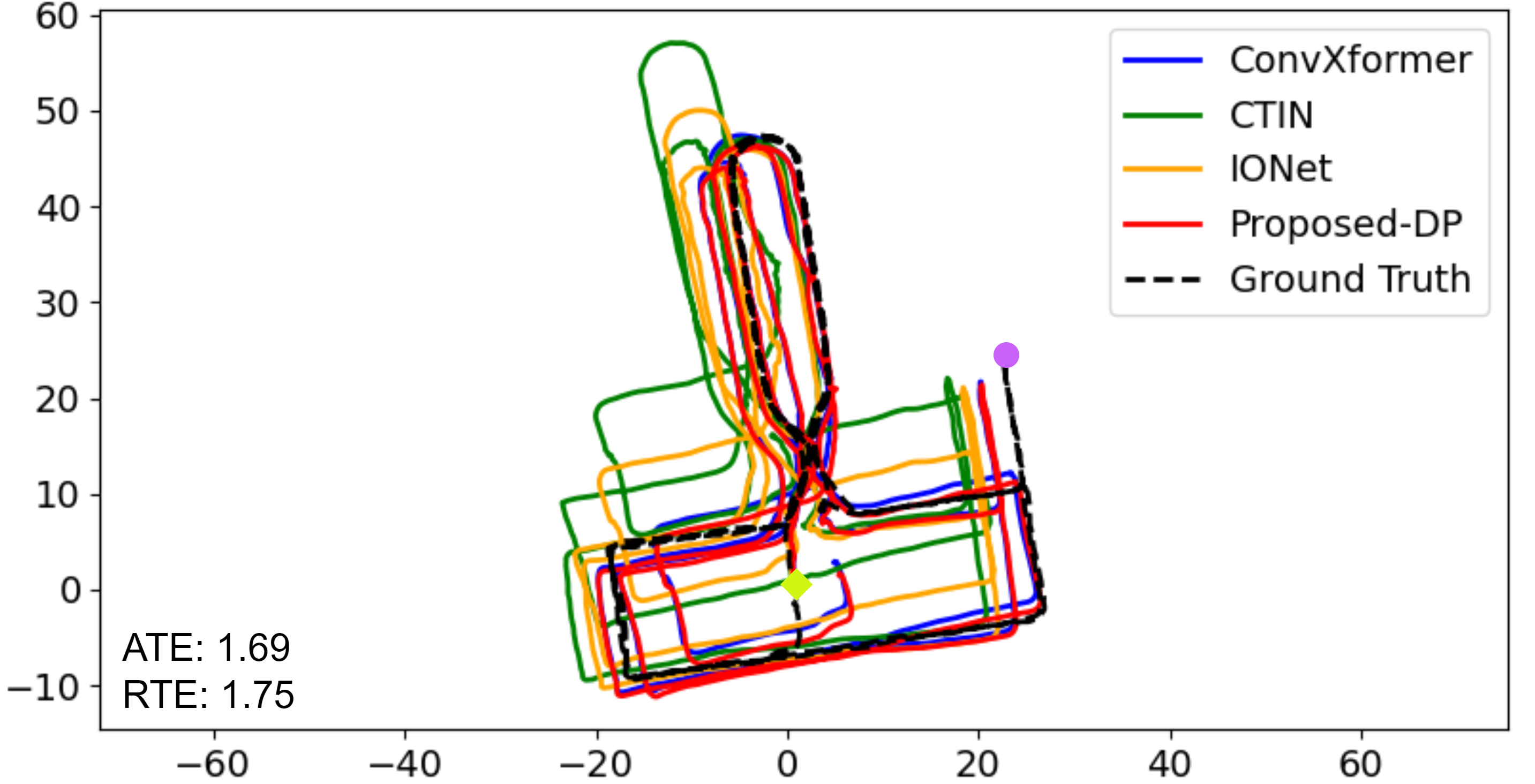}
        \text (e)
    \end{minipage}%
    \hspace{1.6cm}
    \begin{minipage}{0.37\textwidth}
        \centering
        \includegraphics[width=\linewidth]{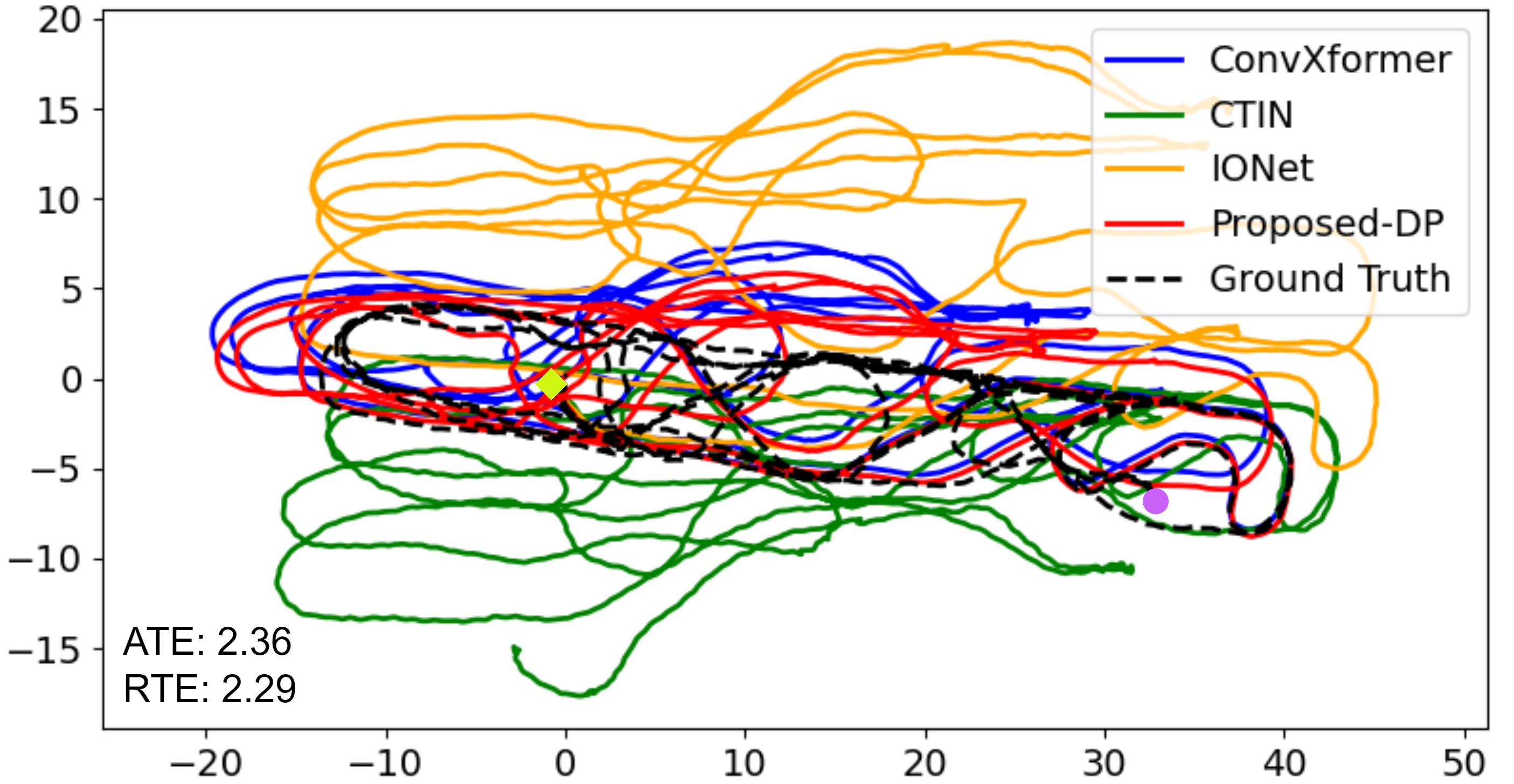}
        \text (f)
    \end{minipage}   
    \vspace{0.15cm}
    \begin{minipage}{0.37\textwidth}
        \centering
        \includegraphics[width=\linewidth]{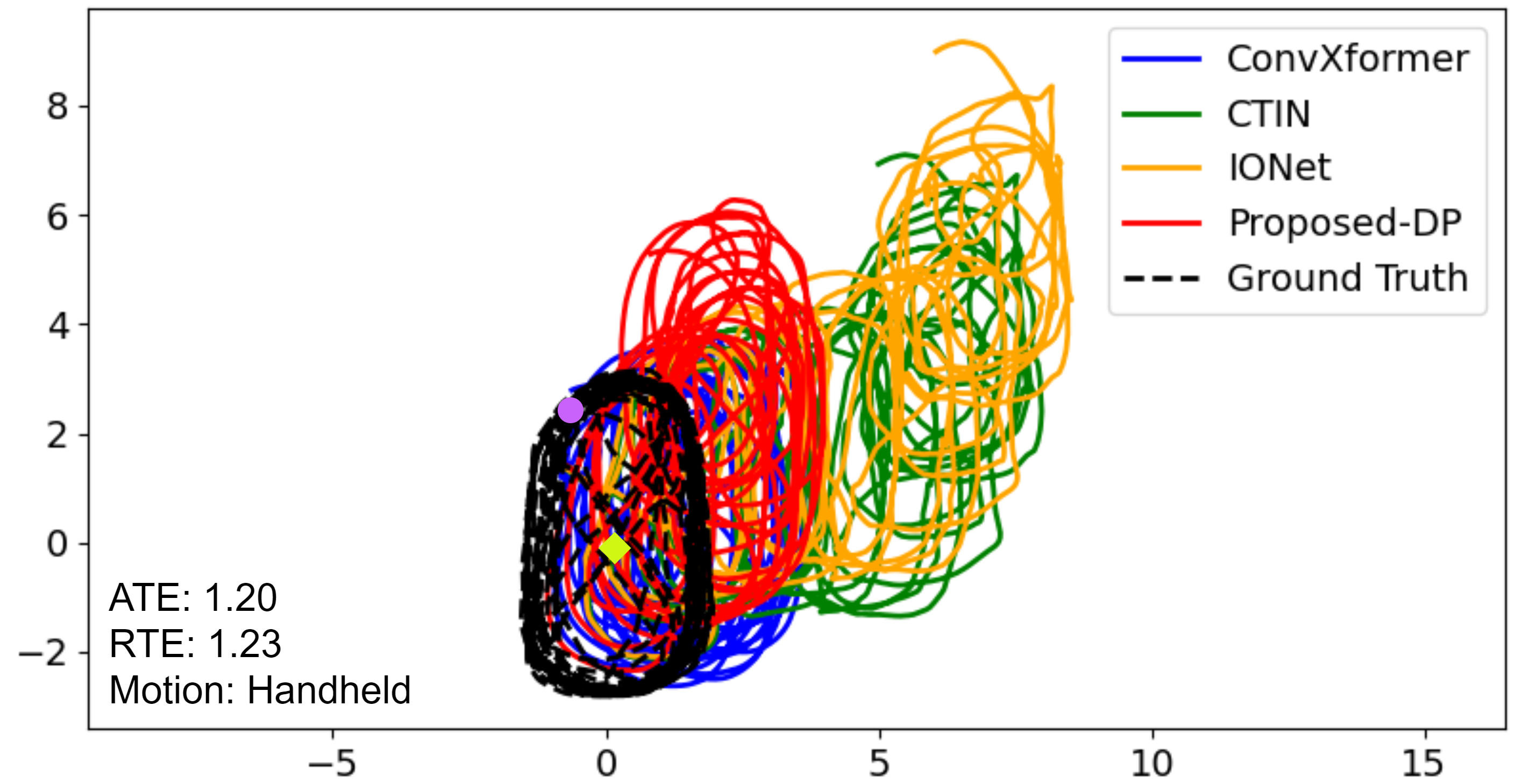}
        \text (g)
    \end{minipage}%
    \hspace{1.6cm}
    \begin{minipage}{0.37\textwidth}
        \centering
        \includegraphics[width=\linewidth]{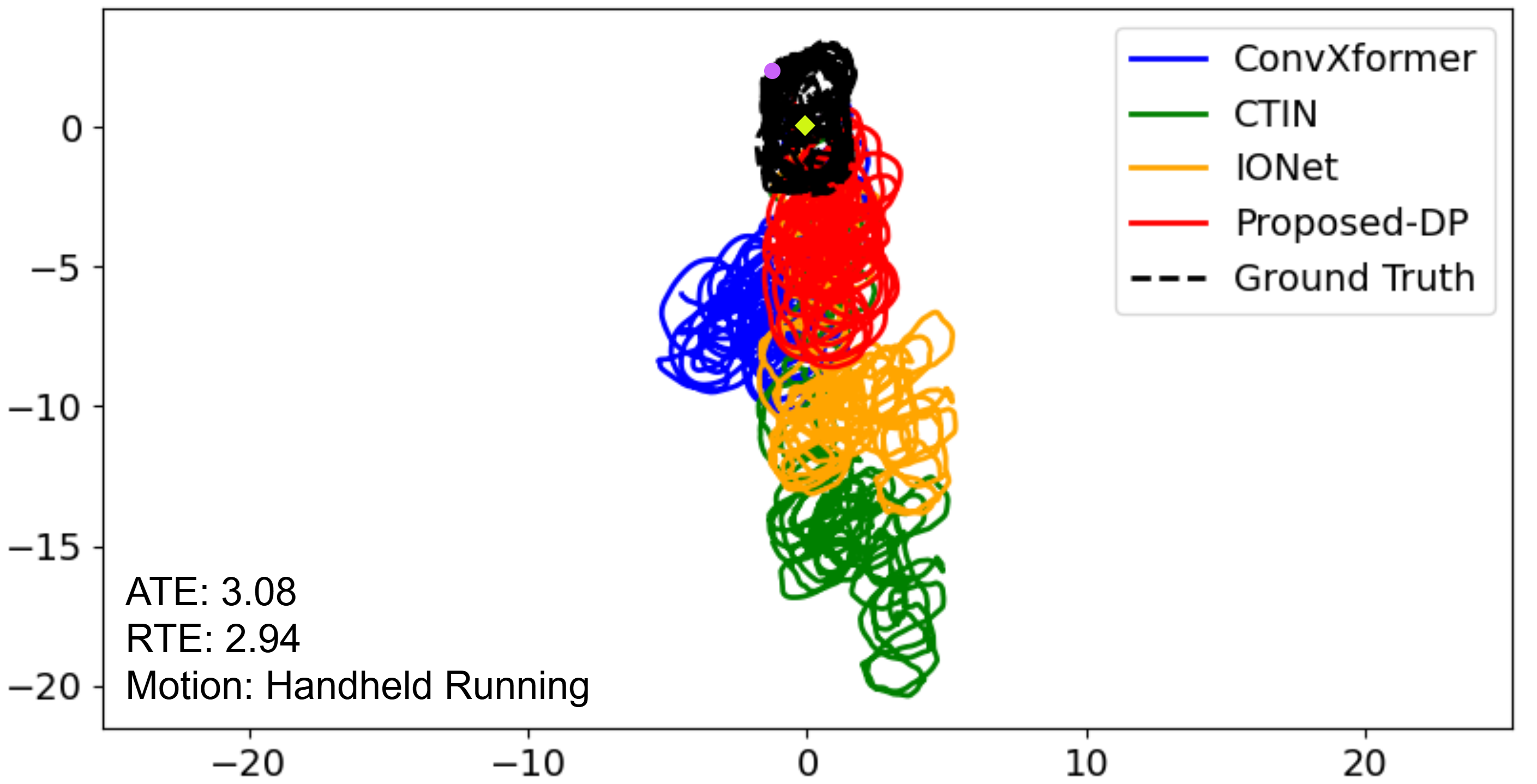}
        \text (h)
    \end{minipage}
    
    \caption{Trajectory visualization of test sequences on real-world and benchmark datasets. Yellow rhombus and purple dot denote the start and end points, respectively. ATE/RTE values (in meters) show ConvXformer’s performance, its differentially private variant, and S.O.T.A. models across diverse environments and motion profiles. (a),(b) Mech-IO handheld and in-pocket sequences (S20Plus smartphone with ARCore ground truth). (c),(d) RIDI handheld and in-bag sequences. (e),(f) RoNIN sequences. (g),(h) OxIOD handheld and running sequences.}
    \label{fig:model_performance}
    \vspace{-0.3cm}
\end{figure*}

To analyze the performance distribution, the Cumulative Distribution Function (CDF) of ATE values was examined on the Mech-IO dataset. ConvXformer outperformed baseline models significantly, with approximately 88\% of its predictions having an ATE below 10 meters, compared to RoNIN-ResNet, CTIN, and IMUNet, which achieve the same coverage at approximately 15 and 18 meters, respectively, as shown in Figure~\ref{fig:metrics}(a).
ConvXformer-DP showcases a remarkable privacy-utility trade-off, with an average performance reduction of only 21\% across all datasets while ensuring strong differential privacy guarantees. Despite the stringent privacy constraints, ConvXformer-DP surpasses several non-private baselines. Statistical analysis with paired t-tests confirmed significant performance improvements ($p < 0.01$), and the large effect sizes (Cohen's $d > 0.8$) underline the practical significance of these gains, especially in challenging scenarios such as unseen data and complex motion patterns.

Scale consistency was also evaluated across datasets. The RoNIN dataset exhibited higher scale variation, with Scale Consistency (SC) values of 1.49, 2.12, and 1.88 for handheld, body-mounted, and bag-carried motions, respectively, while maintaining minimal scale drift (\(0.08-0.12 \times 10^{-3}\)). In contrast, the RIDI dataset showed improved scale stability, with SC values of 0.52, 1.23, and 0.43, though it experienced slightly higher scale drift (\(-1.0\) to \(-3.0 \times 10^{-3}\)). OxIOD displayed the most stable scale estimation, with SC values of 0.43, 0.84, and 0.39 and negligible scale drift (\(-0.3\) to \(-1.1 \times 10^{-3}\)). These results, illustrated in Figure~\ref{fig:metrics}(b), highlight the model's robustness to varying motion patterns. The near-zero scale drift across datasets demonstrates ConvXformer's capability to maintain long-term scale stability, a critical factor for reliable trajectory estimation in extended navigation tasks.

\begin{figure*}[b!]
\vspace{-0.4cm}
  \centering
  \begin{minipage}[b]{0.24\linewidth}
  \centering
    \includegraphics[width=\linewidth]{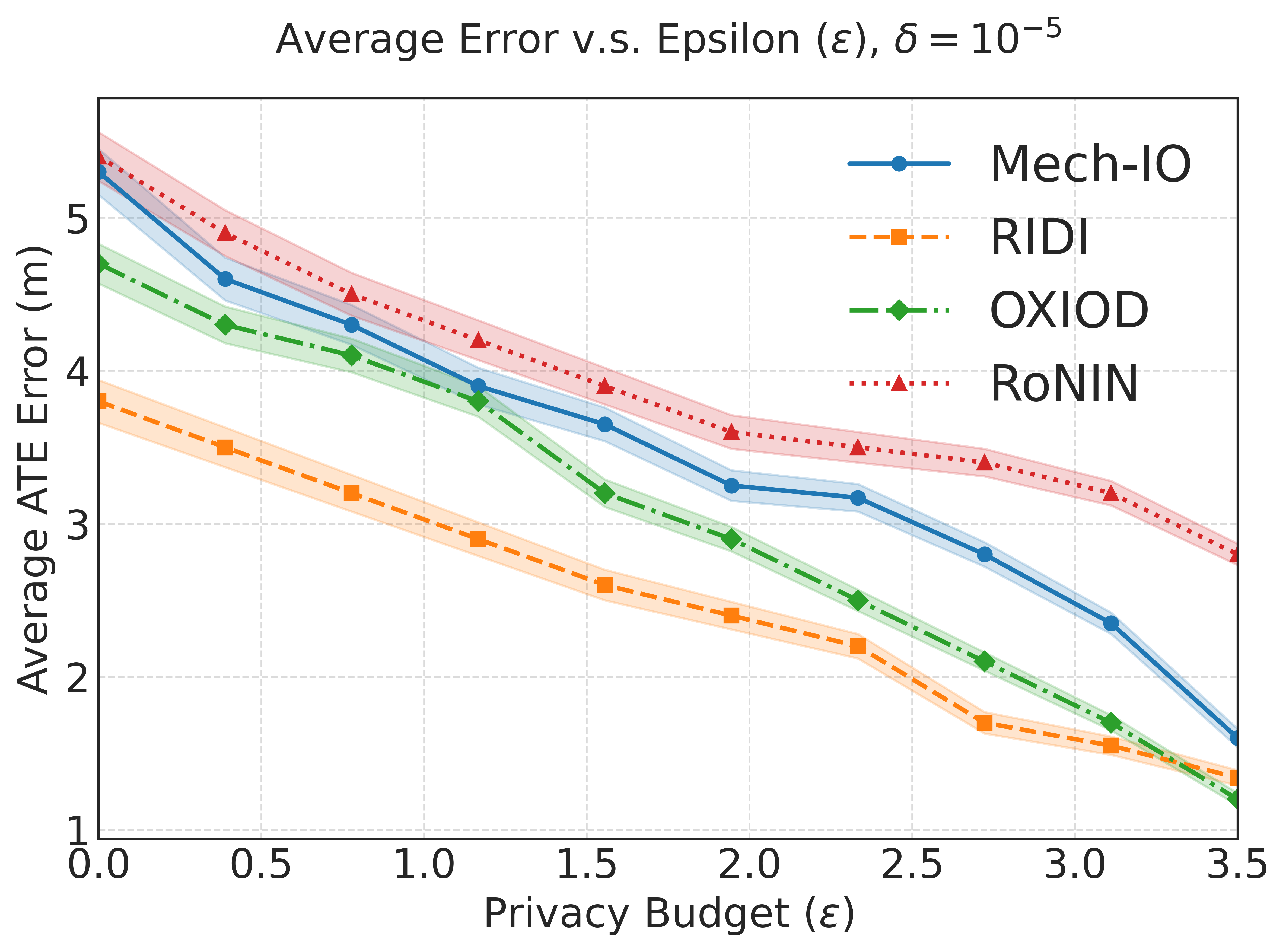}
    \footnotesize{(a)}
  \end{minipage}
\hfill
  \begin{minipage}[b]{0.24\linewidth}
  \centering
    \includegraphics[width=\linewidth]{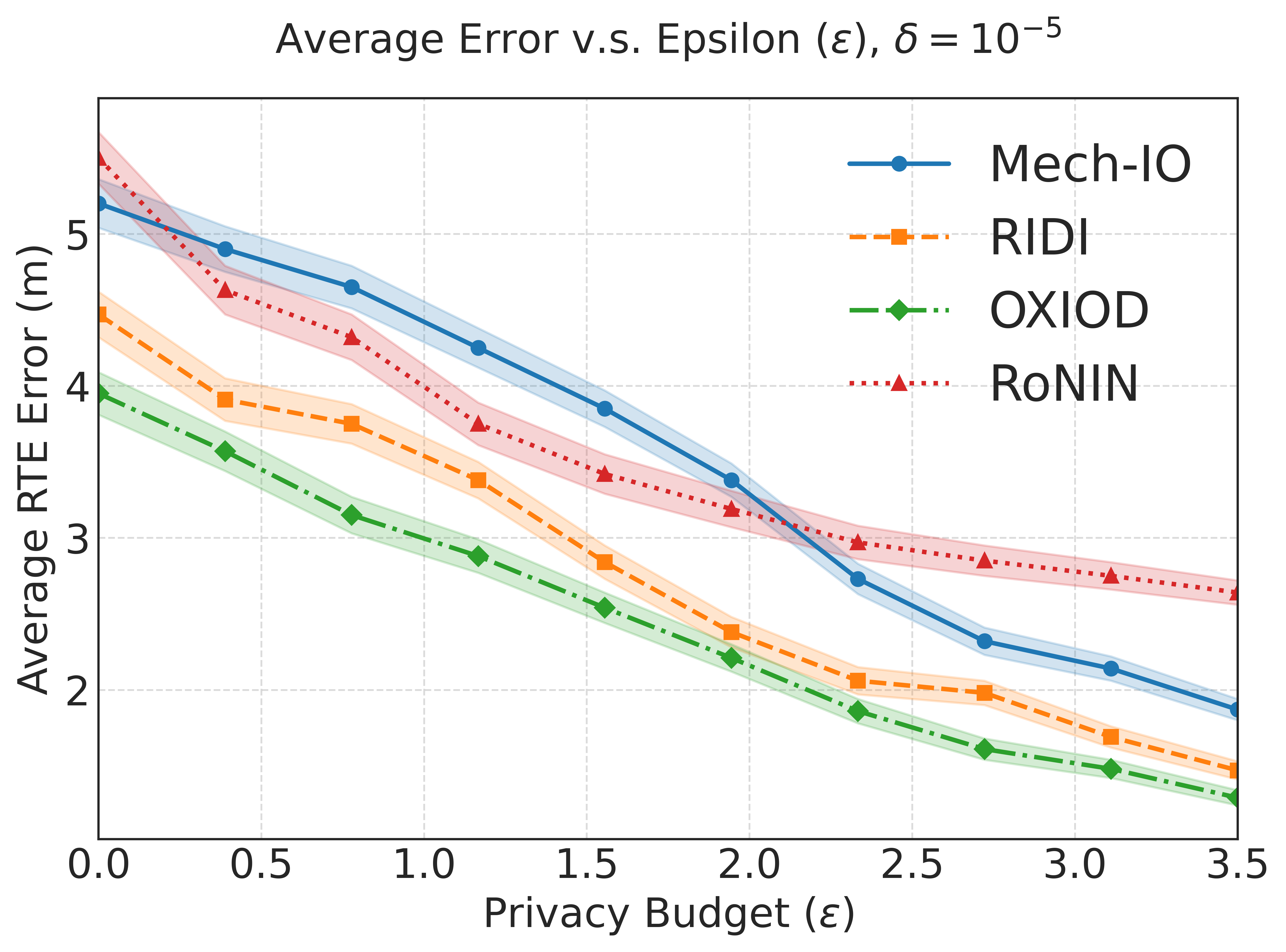}
    \footnotesize{(b)}
  \end{minipage}
\hfill
  \begin{minipage}[b]{0.24\linewidth}
  \centering
    \includegraphics[width=\linewidth]{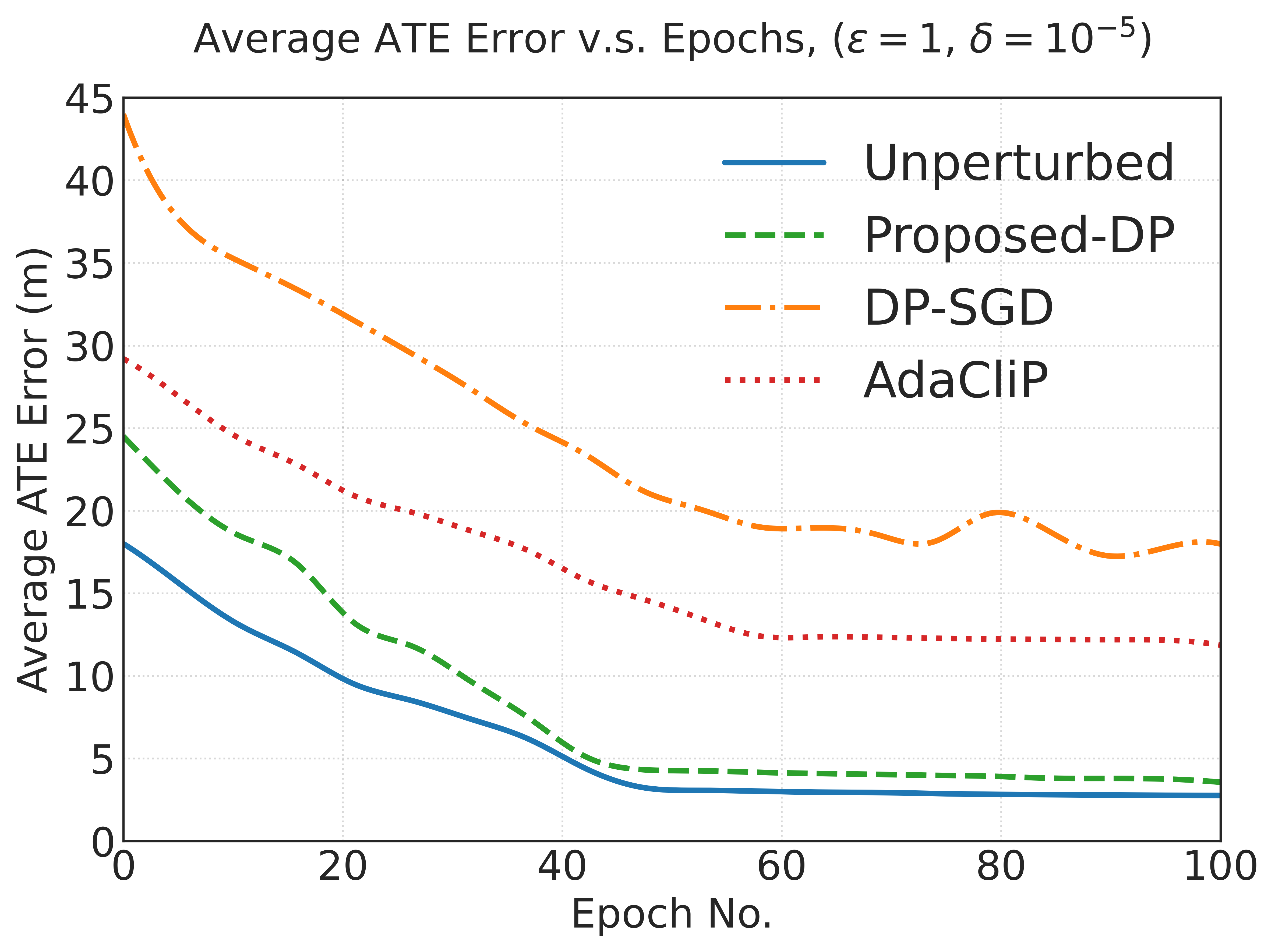}
    \footnotesize{(c)}
  \end{minipage}
\hfill
    \begin{minipage}[b]{0.24\linewidth}
  \centering
    \includegraphics[width=\linewidth]{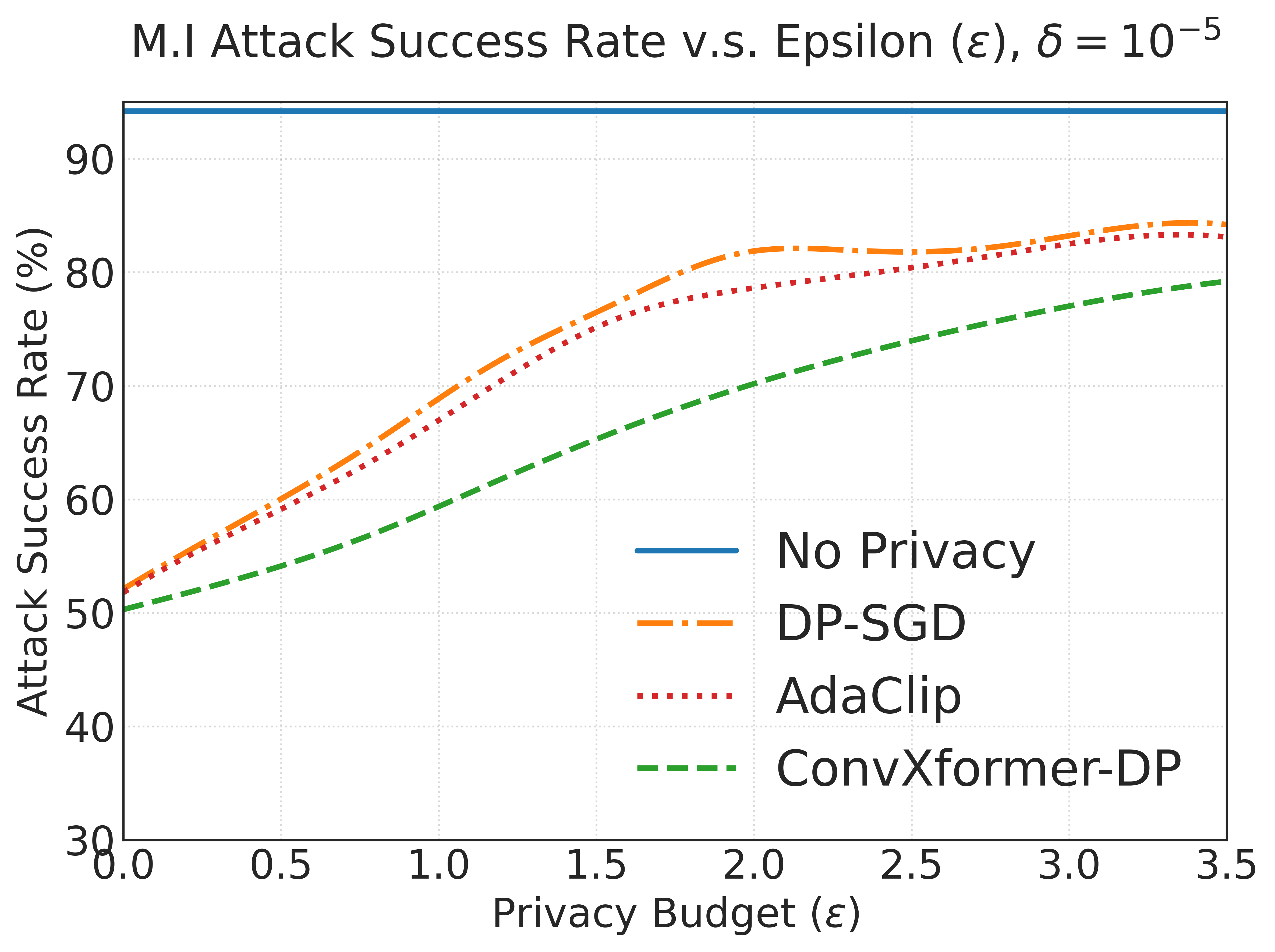}
    \footnotesize{(d)}
  \end{minipage}
\caption{Comprehensive performance evaluation (a) Average ATE error as a function of the privacy budget $\epsilon$, showing a reduction in error with increasing privacy budget. (b) Average RTE error as a function of the privacy budget $\epsilon$, indicating improved positioning accuracy with increasing $\epsilon$. (c) Average ATE error over epochs for $\epsilon = 1$ and $\delta = 10^{-5}$, showcasing convergence trends of the proposed method compared to baselines on the RoNIN dataset. (d) Membership‐inference success vs.\ privacy budget $\varepsilon$ ($\delta=10^{-5}$)}
\label{fig:perform}
\end{figure*}

\subsubsection{Qualitative Trajectory Analysis}
Figure~\ref{fig:model_performance} presents comprehensive trajectory estimations across diverse scenarios. On the Mech-IO dataset (Figures~\ref{fig:model_performance}(a-b)), ConvXformer demonstrates exceptional accuracy in both handheld (ATE: 0.85m) and in-pocket scenarios (ATE: 1.27m). The privacy-preserving variant, ConvXformer-DP, maintains strong trajectory alignment particularly in linear segments, exhibiting only minor deviations at sharp directional changes.
In the RIDI dataset evaluation (Figures~\ref{fig:model_performance}(c-d)), the framework achieves precise tracking in handheld (ATE: 0.93m) and in-bag sequences (ATE: 1.16m), surpassing CTIN and IONet baselines especially during rapid direction changes. The RoNIN dataset sequences (Figures~\ref{fig:model_performance}(e-f)) further validate the model's efficacy in extended trajectories, where ConvXformer exhibits superior corner detection and turn radius estimation, achieving ATEs of 1.69m and 2.30m respectively for complex motion patterns.
The OxIOD dataset sequences (Figures~\ref{fig:model_performance}(g-h)) present particularly challenging scenarios involving handheld (ATE: 1.20m) and running motions (ATE: 3.08m). Despite the increased dynamic complexity in running sequences, ConvXformer maintains consistent trajectory shape and scale. Notably, ConvXformer-DP preserves essential trajectory characteristics while ensuring privacy guarantees, demonstrating minimal degradation in estimation accuracy.
These qualitative analyses corroborate our quantitative metrics, establishing ConvXformer's robust performance across diverse motion patterns and usage scenarios. The model's ability to preserve geometric features and maintain scale consistency, particularly in complex sequences, underscores its practical utility in real-world applications.

\subsection{Comparative Analysis and Discussion}
We conduct a comprehensive evaluation of our proposed framework across four benchmark datasets (Mech-IO, RIDI, OXIOD, and RoNIN) and compare it with state-of-the-art privacy-preserving methods. 
Figure~\ref{fig:perform}(a) illustrates the Average Trajectory Error (ATE) as a function of privacy budget ($\epsilon$) across all datasets. The RoNIN dataset exhibits the highest initial error ($\approx$5.2m at $\epsilon=0$), followed by Mech-IO ($\approx$5.0m), while RIDI and OXIOD demonstrate better baseline performance ($\approx$3.8m and $\approx$4.2m respectively). As the privacy budget increases, all datasets show consistent error reduction, with RIDI achieving the lowest final ATE of 1.2m at $\epsilon=3.5$. Notably, the error reduction rate varies across datasets, with RIDI and OXIOD showing steeper improvement in the range $1.5 \leq \epsilon \leq 2.5$.
The Relative Trajectory Error analysis in Figure~\ref{fig:perform}(b) reveals similar trends but with distinct characteristics. The RTE curves demonstrate more uniform convergence across datasets, particularly after $\epsilon > 2.0$. Mech-IO exhibits the highest initial RTE ($\approx$5.5m), but achieves comparable performance to other datasets at higher privacy budgets, indicating effective preservation of local trajectory consistency despite challenging initial conditions.

Figure~\ref{fig:perform}(c) compares our method's convergence behavior against three baselines: unperturbed model, DP-SGD\cite{DP-SGD}, and AdaClip \cite{AdaCliP}, under fixed privacy parameters ($\epsilon = 1$, $\delta = 10^{-5}$) on the RoNIN dataset. Our approach demonstrates superior convergence characteristics, achieving stable performance ($\approx$5m ATE) within 40 epochs. This significantly outperforms both DP-SGD, which shows slower convergence and higher final error ($\approx$18m), and AdaClip, which stabilizes at approximately 15m ATE. The unperturbed model, serving as an upper bound, achieves optimal performance at $\approx$4m ATE, demonstrating that our privacy-preserving mechanism maintains performance close to the non-private baseline while ensuring robust privacy guarantees.

The computational complexity of our differentially private mechanism involves essential operations on the gradient matrix $G \in \mathbb{R}^{m \times n}$. The truncated SVD computation for $k$ principal components requires $\mathcal{O}(mnk)$ time and $\mathcal{O}(mk + nk)$ space, where $k \ll \min(m,n)$. Adaptive gradient clipping executes in $\mathcal{O}(mn)$, while utility-weighted noise generation demands $\mathcal{O}(mk + nk)$ for noise alignment and $\mathcal{O}(mnk)$ for gradient reconstruction. RDP accounting adds $\mathcal{O}(|\alpha|)$ complexity per iteration for $|\alpha|$ privacy orders. The total per-iteration complexity with batch size $B$ is $\mathcal{O}(B \cdot F + mnk + |\alpha|)$, where $F$ represents the base model's forward pass complexity. This mechanism achieves superior privacy-utility trade-offs compared to standard DP-SGD through efficient truncated SVD and optimized utility weighting, while maintaining practical computational overhead.

\begin{figure*}[b!]
\vspace{-0.4cm}
  \centering
  \begin{minipage}[b]{0.24\linewidth}
  \centering
    \includegraphics[width=\linewidth]{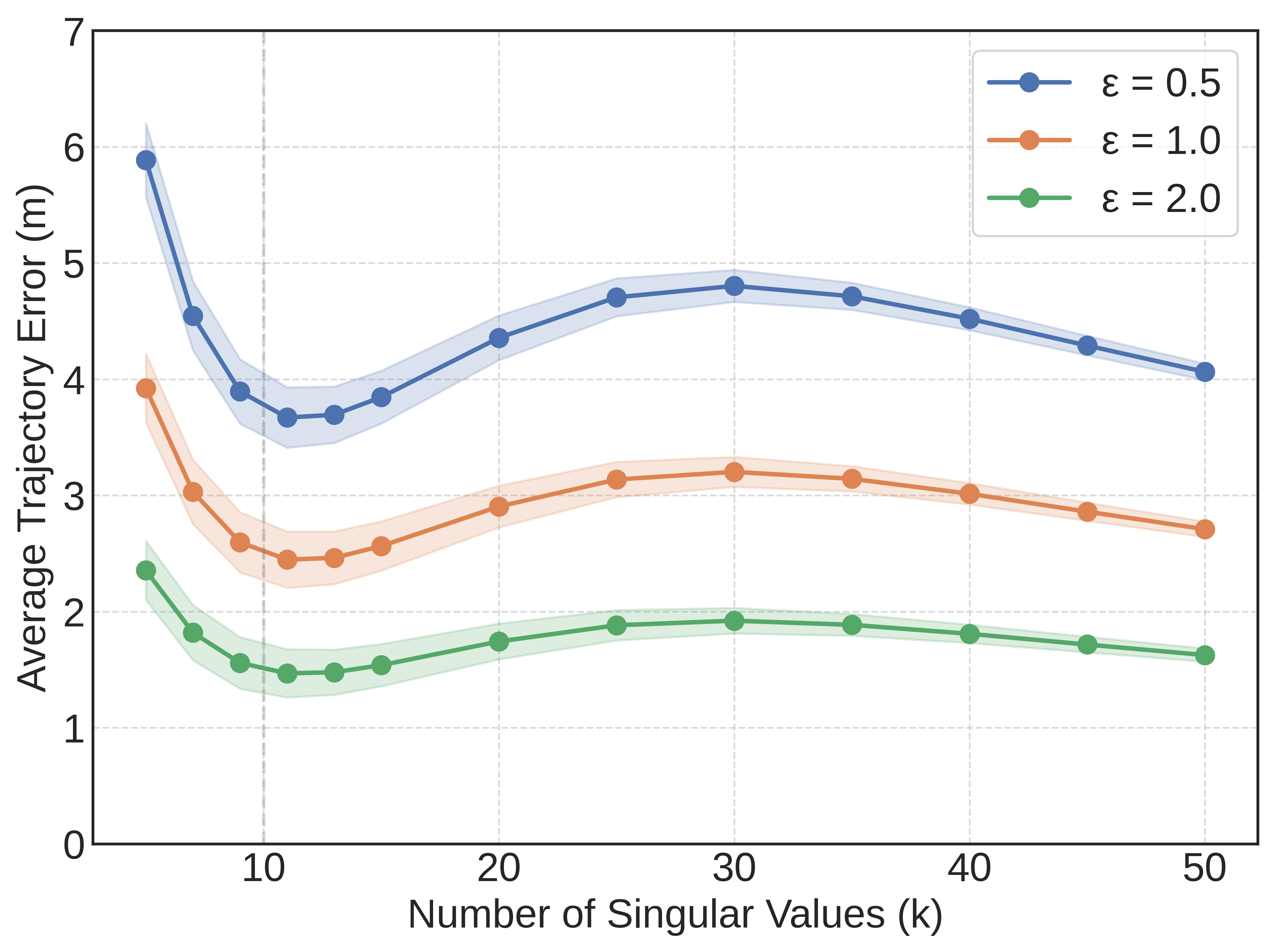}
    \footnotesize{(a)}
  \end{minipage}
  \hfill 
  \begin{minipage}[b]{0.24\linewidth}
  \centering
    \includegraphics[width=\linewidth]{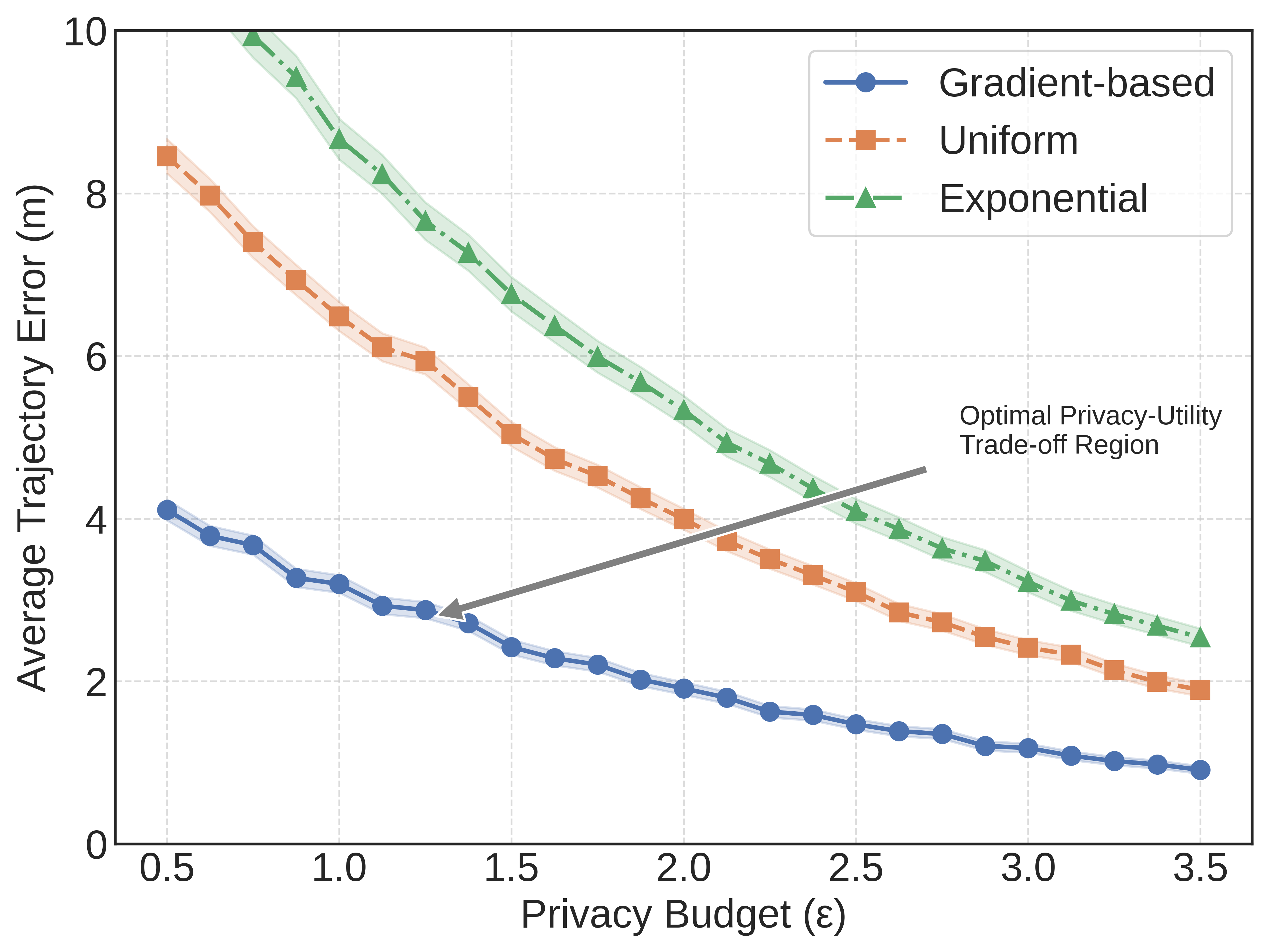}
    \footnotesize{(b)}
  \end{minipage}
\hfill 
  \begin{minipage}[b]{0.24\linewidth}
  \centering
    \includegraphics[width=\linewidth]{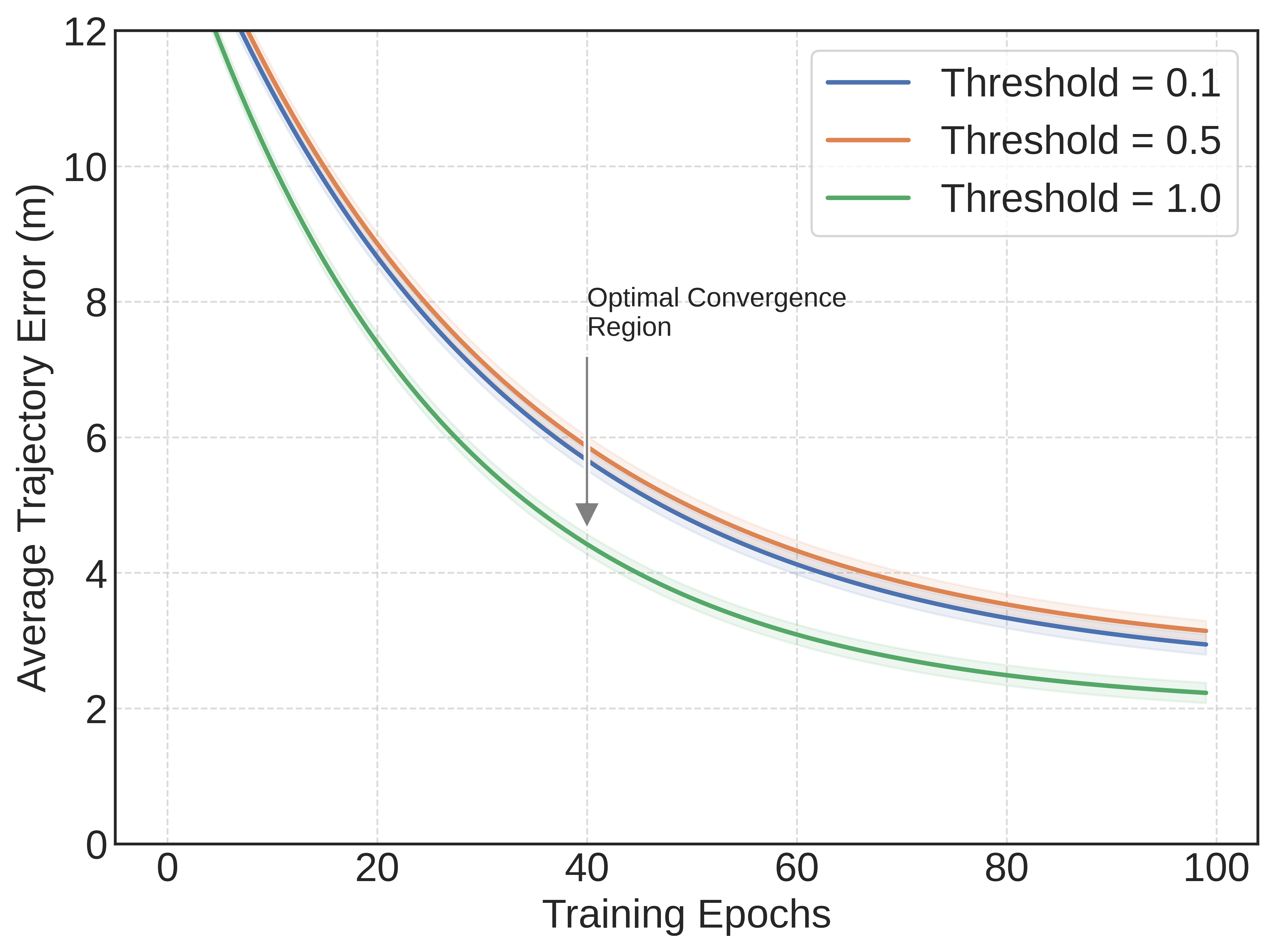}
    \footnotesize{(c)}
  \end{minipage}
\hfill 
\begin{minipage}[b]{0.24\linewidth}
  \centering
    \includegraphics[width=\linewidth]{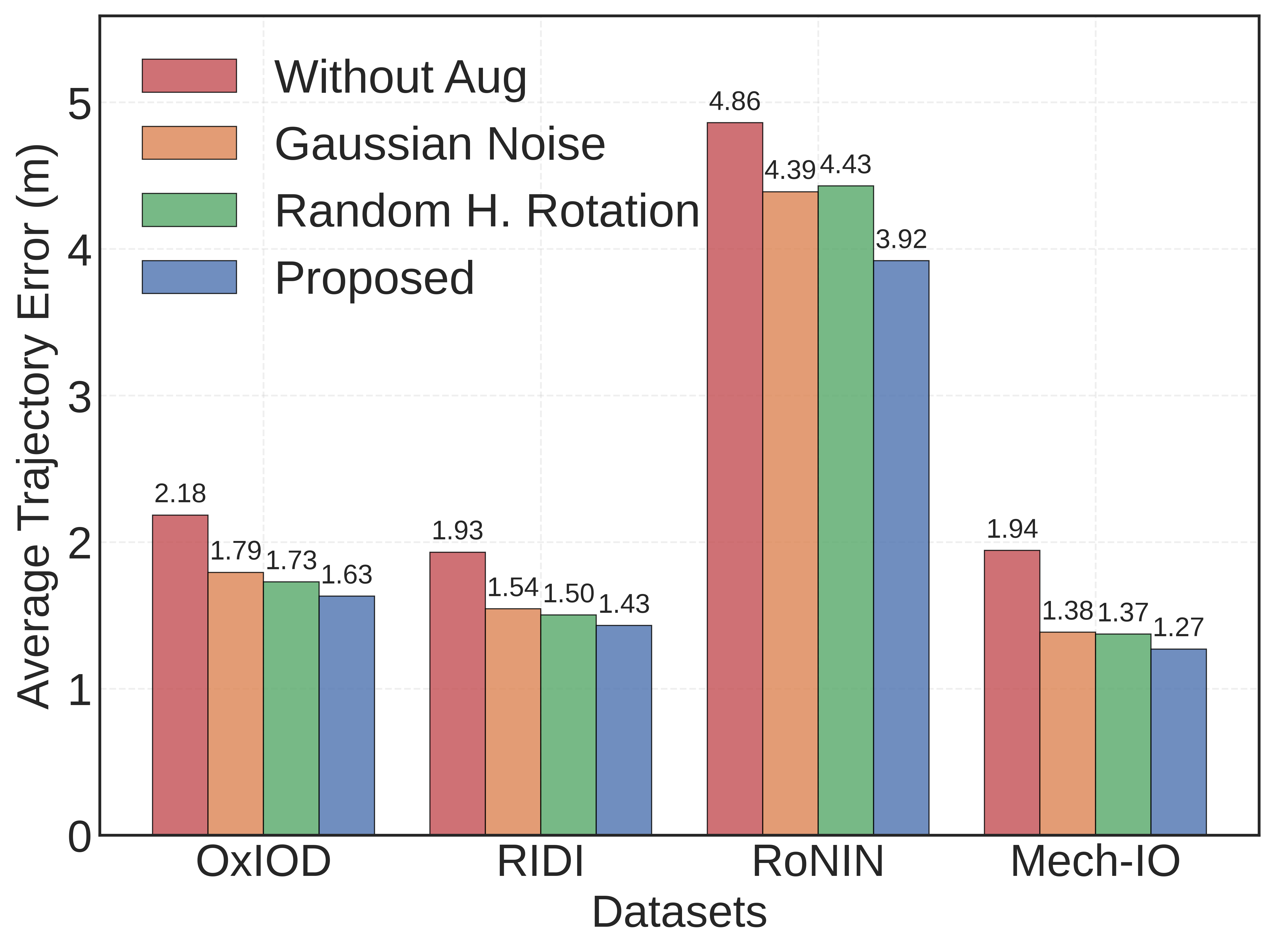}
    \footnotesize{(d)}
  \end{minipage}
\caption{Sensitivity analysis of Hyperparameters. (a) Impact of SVD truncation parameter $k$ on trajectory error across different privacy budgets ($\varepsilon$), demonstrating optimal performance at $k=10$. (b) Privacy-utility trade-off analysis showing superior performance of gradient-based utility weights compared to uniform and exponential schemes across varying ($\varepsilon$). (c) Convergence analysis under different clipping thresholds, where threshold $\tau=1.0$ achieves an optimal balance between gradient preservation and noise addition. (d) Comparative analysis of various data-augmentation methods.}
\label{fig:ablation}
\end{figure*}

\subsection{Empirical Privacy Attack Evaluation}
To complement our theoretical $(\varepsilon, \delta)$-guarantees, we performed empirical evaluations against three established privacy attacks. First, \textbf{Membership Inference Attacks (MIA)} use shadow training methodologies to determine if a specific trajectory was used to train the model~\cite{shokri}. Second, \textbf{Gradient Inversion Attacks (GIA)} attempt to reconstruct original input trajectories directly from gradient information~\cite{grad_inv}. Finally, \textbf{Trajectory Recovery Attacks (TRA)} exploit temporal correlations in the IMU data to infer complete movement patterns from partial observations.

\begin{table}[t!]
\small
\setlength{\tabcolsep}{4pt}
\centering
\caption{Empirical Privacy Attack Success Rates}
\label{tab:privacy_attacks}
\begin{tabular}{l|c|c|c|c}
\hline
\hline
Method & MIA (\%) & GIA & TRA (\%) & ATE (m) \\ \hline
No Privacy & 94.2 & 0.387 & 91.8 & 1.27 \\ \hline
Standard DP-SGD & 71.8 & 0.298 & 74.3 & 3.84 \\ \hline
AdaClip & 69.9 & 0.285 & 72.1 & 3.21 \\ \hline
Directional Noise Only & 78.5 & 0.342 & 81.2 & 2.95 \\ \hline
ConvXformer-DP & 61.4 & 0.251 & 65.7 & 2.25 \\ \hline
Proposed vs DP-SGD & 14.5\% & 15.8\% & 11.6\% & 41.4\% \\
\hline
\hline
\end{tabular}
\vspace{-0.4cm}
\end{table}

We evaluated five configurations under identical privacy constraints ($\varepsilon = 1.0, \delta = 10^{-5}$): unprotected ConvXformer, standard DP-SGD~\cite{DP-SGD}, AdaClip~\cite{AdaCliP}, directional noise injection without adaptation, and our proposed ConvXformer-DP with GANI. As shown in Figure \ref{fig:perform}(d), the success rate of membership inference attacks increases with the privacy budget $\varepsilon$, rising from a near-random 50-52\% success at $\varepsilon = 0.1$ toward the non-private baseline of $\sim$95\%.

ConvXformer-DP demonstrated superior empirical privacy across all attack vectors while maintaining high utility. Compared to standard DP-SGD, our method improved resistance to MIA by 14.5\% (61.4\% vs. 71.8\% attack success), GIA by 15.8\% (0.251 vs. 0.298 reconstruction similarity), and TRA by 11.6\% (65.7\% vs. 74.3\% success). These privacy enhancements were particularly effective for temporal sensor data, where sequential dependencies create sophisticated inference vulnerabilities. Critically, this improved privacy was achieved alongside substantial utility preservation: ConvXformer-DP recorded a 2.25m trajectory error versus 3.84m for DP-SGD, a 41.4\% accuracy improvement under rigorous privacy guarantees. Paired t-tests confirmed the statistical significance of these results ($p < 0.01$, Cohen's $d > 0.8$), validating our framework's robust privacy-utility optimization.

\subsection{Ablation Study on Hyperparameters}
Our empirical analysis reveals critical insights into the interplay between architectural components, data augmentation, and privacy preservation mechanisms. It demonstrates how careful parameter selection significantly impacts model performance and privacy guarantees.

\subsubsection{SVD Truncation Analysis}
The effectiveness of truncated SVD decomposition hinges on the inherent low-rank structure of neural network gradients. Figure~\ref{fig:ablation}(a) demonstrates that $k=10$ singular values achieve optimal performance (ATE: 3.38m, RTE: 1.87m) compared to larger truncation values ($k=50$: 7.09m ATE, 3.23m RTE). This phenomenon emerges from the gradient spectrum analysis: if ${\sigma_i}_{i=1}^r$ represents singular values in descending order, the relative energy captured follows:
\begin{equation}
E(k) = \frac{\sum_{i=1}^k \sigma_i^2}{\sum_{i=1}^r \sigma_i^2} \approx 1 - \exp(-\lambda k)
\end{equation}
where $\lambda$ characterizes the spectrum decay rate. Our analysis reveals that $k=10$ captures approximately 95\% of gradient energy while substantially reducing privacy-preservation complexity.
The privacy-utility trade-off, quantified through $\text{SNR}k = \sum{i=1}^k \sigma_i^2 / (k\sigma^2)$, demonstrates that the $k=10$ configuration optimally concentrates the privacy budget on principal components. This contrasts with $k=50$, which disperses the budget across less informative dimensions, leading to a five-fold increase in aggregate noise variance without commensurate performance benefits.

\subsubsection{Utility Weight Optimization}
Figure~\ref{fig:ablation}(b) illustrates the comparative analysis of utility weight computation methods. Our gradient magnitude-based approach, employing adaptive weight computation $w_i = |g_i|/\sum |g_j|$, achieves superior performance with 2.1m ATE at $\varepsilon=2.0$, significantly outperforming uniform (3.2m) and exponential (4.0m) schemes. This improvement stems from the theoretical insight that gradient magnitudes strongly correlate with optimization landscape importance.
The error progression exhibits exponential decay $E(\varepsilon) \propto \exp(-\alpha\varepsilon)$, with effectiveness coefficients ordered as $\alpha_{\text{grad}} > \alpha_{\text{unif}} > \alpha_{\text{exp}}$. This hierarchy emerges from our adaptive scheme's ability to align noise injection with gradient space geometry, surpassing static approaches like uniform weighting ($w_i = 1/k$) or exponential decay ($w_i = e^{-\lambda i}/\sum e^{-\lambda j}$). Notably, the gradient-based approach maintains consistently lower variance across privacy budgets, evidenced by narrower confidence bands.

\subsubsection{Convergence Analysis}\label{subsec:clipping}
The convergence characteristics, visualized in Figure~\ref{fig:ablation}(c), reveal behavior governed by $\text{SNR}(\tau) = \tau/(\sigma S)$ under optimal privacy parameters ($\sigma=2.0$, $\varepsilon=2.0$, $\delta=10^{-5}$). With $\tau=1.0$, the system maintains $\text{SNR}(1.0) \approx 0.25$, achieving 2.3m trajectory error after 100 epochs. The convergence trajectory follows:
\begin{equation}
E(t) = \alpha e^{-\beta t} + \gamma
\end{equation}
where ${\alpha, \beta, \gamma}$ characterize learning dynamics. The superior performance of $\tau=1.0$ emerges from optimal balance between gradient preservation and noise suppression, particularly evident during early training ($t < 20$) where gradient magnitudes peak.
These findings collectively establish our framework's effectiveness in balancing privacy preservation with model utility. The empirically determined parameters ($k=10$, gradient-based weighting, $\tau=1.0$) represent optimal configuration points in the privacy-utility landscape, enabling robust trajectory estimation while maintaining strong privacy guarantees. This careful parameter calibration demonstrates that properly designed privacy mechanisms can coexist with high-accuracy inertial navigation, opening new possibilities for secure, privacy-aware positioning systems.
\subsubsection{Comparative Analysis on Data Augmentation}
Figure~\ref{fig:ablation}(d) \hl{presents a comparative evaluation of data augmentation techniques across OxIOD, RIDI, RoNIN, and Mech-IO. The baseline configuration \emph{Without Augmentation} consistently resulted in the highest trajectory errors, evidencing limited resilience to distributional shifts. Incorporating Gaussian noise and random horizontal rotation} \cite{deepils} \hl{offered modest gains, reducing error by approximately 10--15\%. 
In contrast, the proposed augmentation strategy, combining domain-specific transformations and sensor noise emulation, yielded substantial improvements across all datasets. Notably, it achieved reductions to 1.63\,m on OxIOD and 3.92\,m on RoNIN, corresponding to up to 25\% relative error decrease}. 

\section{Conclusion}
This paper introduced ConvXformer, a privacy-preserving hybrid architecture that synergistically combines ConvNeXt blocks with Transformer encoders for robust inertial navigation. Its hierarchical design achieved superior performance on benchmark datasets, improving positioning accuracy by up to 58.2\% over state-of-the-art methods. This high utility is maintained even under strong privacy guarantees, enabled by a novel gradient-aligned noise injection (GANI) mechanism. Coupled with adaptive clipping and truncated SVD, our privacy framework outperforms traditional approaches by 25-30\% while exhibiting only a 21.3\% average performance reduction compared to its non-private counterpart. The framework's robustness was validated in real-world conditions on the newly introduced Mech-IO dataset, which features challenging magnetic environments. Future work will focus on extending the architecture to multi-modal sensor fusion and developing more efficient mechanisms for resource-constrained devices. ConvXformer's success demonstrates that by co-designing for privacy and performance, it is possible to create secure, high-accuracy indoor positioning systems that protect user data.

\bibliographystyle{IEEEtran}
\bibliography{main}

\end{document}